%%%%%%%% ICML 2025 EXAMPLE LATEX SUBMISSION FILE %%%%%%%%%%%%%%%%%

\documentclass{article}

% Recommended, but optional, packages for figures and better typesetting:
\usepackage{microtype}
\usepackage{graphicx}
\usepackage{subfigure}
\usepackage{amsmath}
\usepackage{amsfonts}
\usepackage{booktabs} % for professional tables
\usepackage{xcolor} 
\usepackage{algorithm}
\usepackage{algcompatible}
\usepackage{wrapfig}
\usepackage[export]{adjustbox}

% My packages
\usepackage{amsthm}
\usepackage{mathtools}
%\usepackage{algpseudocode}
%\usepackage{algorithm}

% hyperref makes hyperlinks in the resulting PDF.
% If your build breaks (sometimes temporarily if a hyperlink spans a page)
% please comment out the following usepackage line and replace
% \usepackage{icml2025} with \usepackage[nohyperref]{icml2025} above.
\usepackage{hyperref}

\newtheorem*{ass0}{Assumption 0}
\newtheorem*{ass1}{Assumption 1}
\newtheorem*{ass2}{Assumption 2}
\newtheorem*{thm41}{Theorem 4.1}
\newtheorem*{thm42}{Theorem 4.2}

\newtheorem*{rem1}{Remark 1}

\newcommand{\indep}{\perp\!\!\!\perp}

\newcommand{\N}{\mathcal{N}}
\newcommand{\D}{\mathcal{D}}
\newcommand{\G}{\mathbf{G}}
\newcommand{\lb}{\mathbf{l}}
\newcommand{\ub}{\mathbf{u}}
\newcommand{\gv}{\mathbf{g}}

\newcommand{\x}{\mathbf{x}}
\newcommand{\z}{\mathbf{z}}
\newcommand{\X}{\mathbf{X}}
\newcommand{\LL}{\mathbf{L}}
\newcommand{\Z}{\mathbf{Z}}
\newcommand{\y}{\mathbf{y}}

\newcommand{\mcs}{M}

% Attempt to make hyperref and algorithmic work together better:

% Use the following line for the initial blind version submitted for review:
 %\usepackage{neurips_2025}
\usepackage[main, final]{neurips_2025}

% If accepted, instead use the following line for the camera-ready submission:
%\usepackage[accepted]{icml2025}

% For theorems and such
\usepackage{amsmath}
\usepackage{amssymb}
\usepackage{mathtools}
\usepackage{amsthm}

% if you use cleveref..
\usepackage[capitalize,noabbrev]{cleveref}

%%%%%%%%%%%%%%%%%%%%%%%%%%%%%%%%
% THEOREMS
%%%%%%%%%%%%%%%%%%%%%%%%%%%%%%%%
\theoremstyle{plain}
\newtheorem{theorem}{Theorem}[section]

\newtheorem{corollary}[theorem]{Corollary}
\theoremstyle{definition}

\theoremstyle{remark}

% Todonotes is useful during development; simply uncomment the next line
%    and comment out the line below the next line to turn off comments
%\usepackage[disable,textsize=tiny]{todonotes}
%\usepackage[textsize=tiny]{todonotes}

% The \icmltitle you define below is probably too long as a header.
% Therefore, a short form for the running title is supplied here:
\title{Regression Trees Know Calculus}

\begin{document}

\author{Nathan Wycoff \\ 
Department of Mathematics and Statistics\\ University of Massachusetts \\
Amherst, MA 01003 \\
\texttt{nwycoff@umass.edu}
}
% this must go after the closing bracket ] following \twocolumn[ ...

% This command actually creates the footnote in the first column
% listing the affiliations and the copyright notice.
% The command takes one argument, which is text to display at the start of the footnote.
% The \icmlEqualContribution command is standard text for equal contribution.
% Remove it (just {}) if you do not need this facility.

\maketitle

\begin{abstract}
Regression trees have emerged as a preeminent tool for solving real-world regression problems due to their ability to deal with nonlinearities, interaction effects and sharp discontinuities.
In this article, we rather study regression trees applied to well-behaved, differentiable functions, and determine the relationship between node parameters and the local gradient of the function being approximated.
We find a simple estimate of the gradient which can be efficiently computed using quantities exposed by popular tree learning libraries. 
This allows tools developed in the context of differentiable algorithms, like neural nets and Gaussian processes, to be deployed to tree-based models.
To demonstrate this, we study measures of model sensitivity defined in terms of integro-differential quantities and demonstrate how to compute them for regression trees using the proposed gradient estimates. 
Quantitative and qualitative numerical experiments reveal the capability of gradients estimated by regression trees to improve predictive analysis, solve tasks in uncertainty quantification, and provide interpretation of model behavior.
\end{abstract}

\section{Introduction}\label{sec:intro}

Tree-based methods, such as regression trees, are a workhorse of the contemporary data scientist. 
Their ease of use, computational efficiency and predictive capability without the need for extensive feature engineering makes them popular with practitioners. 
The most widely used version of regression trees approximate with greedily constructed, piecewise-constant functions than can handle data which exhibit discontinuities or divergent behavior in various parts of the feature-space. 
Perhaps because of their capability to tackle pathological problems, it seems that some of their properties in approximating well-behaved functions may have gone unnoticed.

In this article, we will study the approximation of a well-behaved differentiable function $f$ on the unit cube in dimension $P$ with a piecewise constant regression tree.
In particular, we will investigate a means of approximating $\nabla f$ using only information contained in the tree structure computable in a single pass through the tree.  
We find a simple and easily computable quantity analogous to a finite difference and 
%, via the tree's structure,
% can 
use it to rapidly form estimates of integro-differential quantities.
Previously, gradient estimation in regression trees has been studied in the context where the leaves have differentiable models, and the gradients of these models are used to estimate the gradient (e.g. \citet{chaudhuri1995generalized,loh2011classification}). 
However, the constant-leaf tree remains prevalent in practice, and the purpose of this article is rather to examine the \textit{implicit} gradient estimation that occurs within these constant-leaf trees where, formally, the gradient of the tree is almost-everywhere zero.

With a gradient estimator in hand, we unlock for tree-based models the stable of existing gradient-based methods for variable interpretation and dimension reduction developed in other areas.
Among many possibilities, we will study in particular the Active Subspace Method \citep{constantine2015active},  a global dimension reduction technique from the Uncertainty Quantification literature, and the Integrated Gradient Method \citep{sundararajan2017axiomatic}, a local model interpretation technique from the neural network literature.

Active Subspaces provide for linear dimension reduction, which is already commonly used in the setting of regressions trees, such as when using random projection or PCA for rotated trees \citep{breiman2001random,rodriguez2006rotation}. 
In contrast to these methods, however, active subspaces consists of supervised linear dimension reduction which takes the relationship between features and response into account.  
Already, supervised linear dimension reduction has been proposed for use with tree based methods where, e.g. a kernel method is used to perform the dimension reduction which is then applied to a tree-based method  \citep{shan2015learning}. 
But here, we show how to actually use the tree itself to perform a linear sensitivity analysis, rather than relying on a helper model to do this. 
This is essential if the analyst is interested in \textit{model}-interpretation (as contrasted with \textit{data}-interpretation \citep{chen2020true}), and to the best of our knowledge the application of the Active Subspace method, enabled by our novel gradient estimates, is the first such linear sensitivity metric for trees.
And while we've so far discussed what active subspaces can do for regression trees, we don't think this new relationship will be one-sided.
Our numerical experiments show that regression trees can serve as scalable estimators of the active subspace, favorable to existing methods in certain circumstances.
We hope this can highlight the potential for regression trees in the gradient-based UQ space.

\begin{figure}
    \centering

    \includegraphics[width=0.3\linewidth]{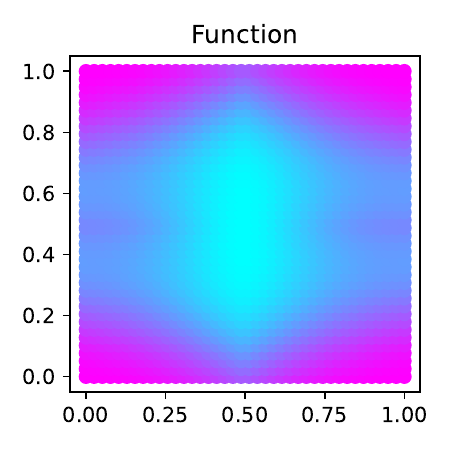}
    \includegraphics[width=0.3\linewidth]{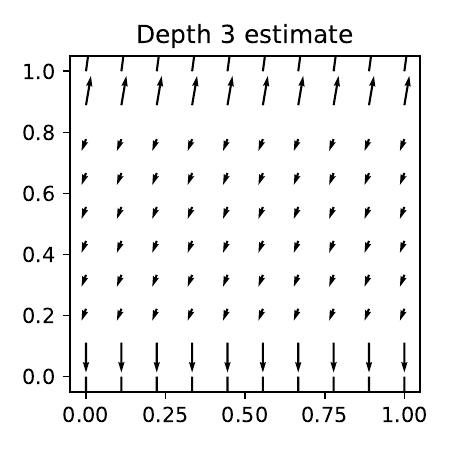}
    \includegraphics[width=0.3\linewidth]{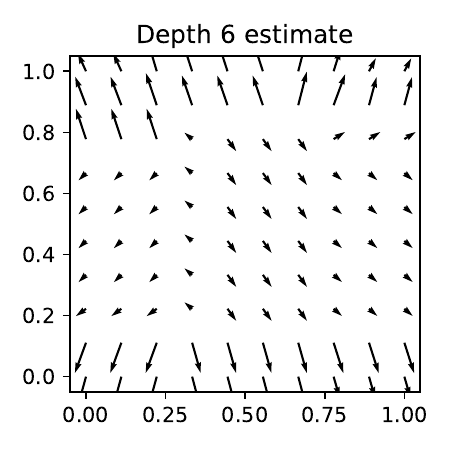}
    
    \includegraphics[width=0.3\linewidth]{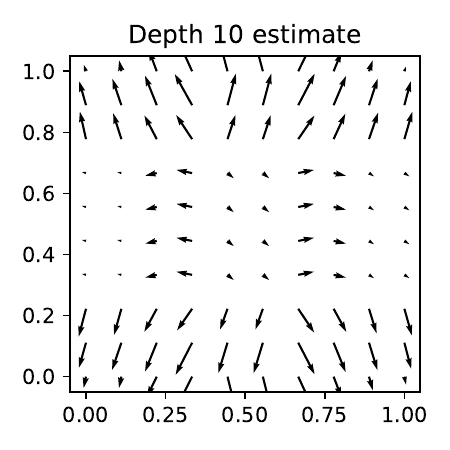}
    \includegraphics[width=0.3\linewidth]{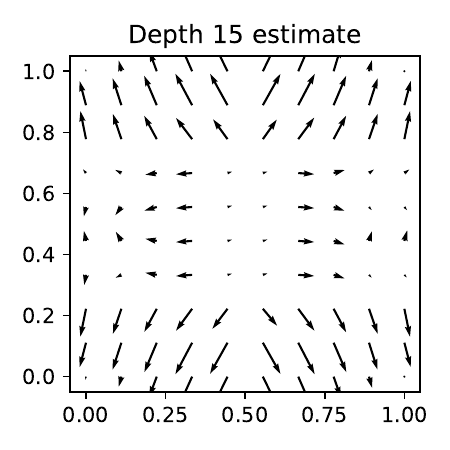}
    \includegraphics[width=0.3\linewidth]{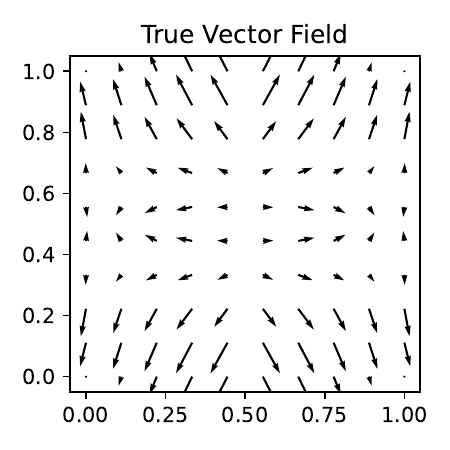}
    
    \caption{\textbf{Illustration of Gradient Estimates}. The top left gives a target function, and the bottom right gives its gradient vector field.
    Shown in between are estimates of the gradient extracted from a regression tree fit to data from the function converging to the true vector field.}
    \label{fig:hook}
\end{figure}

We view the major contributions of our article as follows:
\begin{enumerate}
	\item We study a simple algorithm to extract gradient and integrated gradient information from regression trees. 
	\item We show how to use this to port gradient-based interpretability techniques from other fields to benefit interpretability of regression trees.
	\item We find that in certain circumstances gradient-enabled regression trees can produce better estimates of active subspaces than existing UQ methods.
\end{enumerate}
We begin in Section \ref{sec:bg} by discussing pertinent background on regression trees and gradient-based interpretability.
Our main methodological contributions are given in Section \ref{sec:est} and developed theoretically in Section \ref{sec:theory}.
Subsequently, we illustrate these procedures on numerical examples in Section \ref{sec:num} before offering conclusions and future research directions in Section \ref{sec:disc}.

\section{Background}\label{sec:bg}

\begin{figure}
    \centering
      \begin{minipage}{.85\linewidth}
    \centering
    \includegraphics[width=0.85\linewidth]{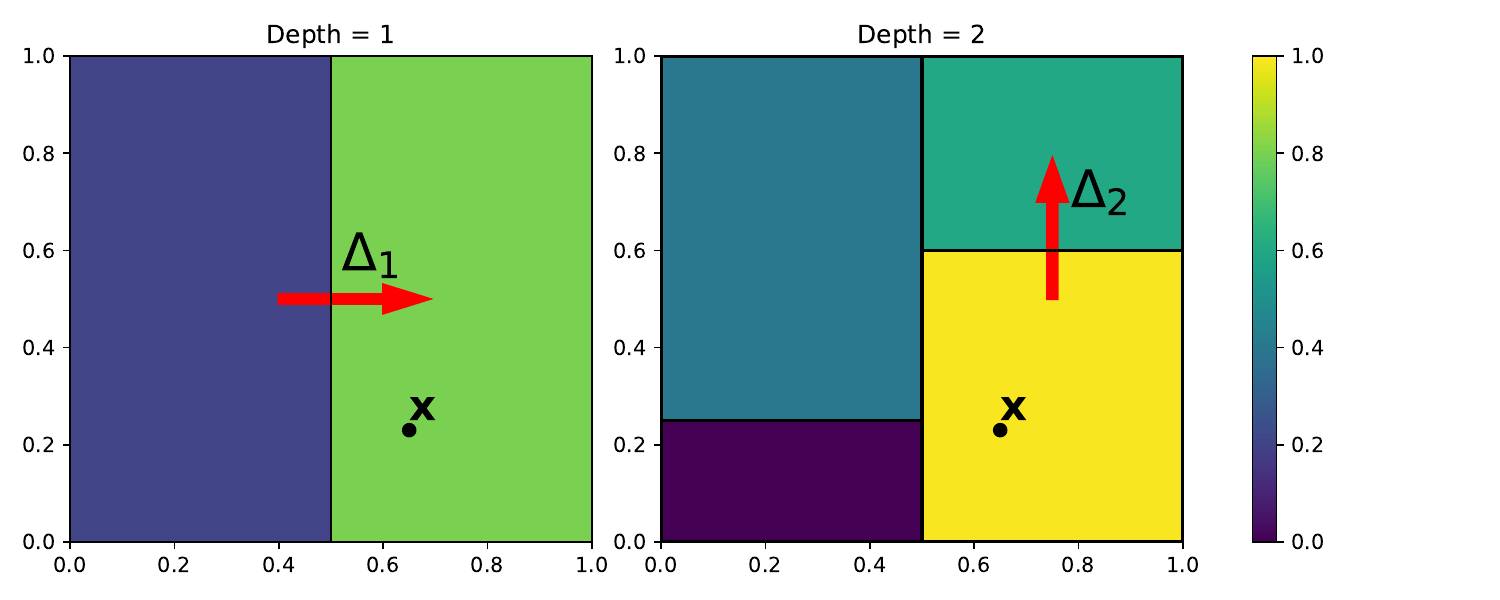}
  \end{minipage}%
  \hspace{-6em}
  \begin{minipage}{.15\linewidth}
    \begin{align*}
    \nabla f (\x) \approx 
    \begin{bmatrix}
        \Delta_1 \\
        \Delta_2 \\
    \end{bmatrix}=
    \\
    \begin{bmatrix}
        \frac{
        {\color[HTML]{7ad151}\bf 0.8}
        -
        {\color[HTML]{414487}\bf 0.2}
        }{1-0} \\
        \frac{
        {\color[HTML]{22a884}\bf 0.6}
        -
        {\color[HTML]{ced900}\bf 1.0}
        }{1-0} \\
    \end{bmatrix}
    =
    \begin{bmatrix}
        0.6\\
        -0.4
    \end{bmatrix}
    \end{align*}
  \end{minipage}
    \caption{We can extract finite difference gradient approximations from a regression tree by comparing values of adjacent nodes in splits.}
    \label{fig:main_idea}
\end{figure}

While small decision trees are easily interpretable, they become significantly less so as they grow in size or if they are aggregated via Random Forests \citep{breiman2001random,breiman2002manual} or gradient boosting \citep{friedman2001greedy}.
Consequently, developing methods for increasing the interpretability of trees is of great importance \citep{louppe2014understanding}.

\subsection{Variable Interpretation and Selection in Regression Trees}

Several methods for variable importance were suggested alongside some of the original tree-based methods \citep{breiman2017classification,breiman2001random,breiman2002manual}, the two most salient for our study being the Mean Decrease in Impurity (called MDI or TreeWeight \cite{kazemitabar2017variable}) and the Feature Permutation method.
MDI assigns importance to variables in proportion to the mean reduction in cost when splitting along a given variable, an intuitively reasonable approach. 
However, MDI as originally proposed was sensitive to properties of the feature variables as well as to the depth within the tree that a split occurred, leading to ``debiased" variants \citep{sandri2008bias,li2019debiased}.
\citet{kazemitabar2017variable} as well as \citet{klusowski2020nonparametric} studied theoretical properties of a simplified version of this metric using only the first split (``stumps").
On the other hand Feature Permutation is based on measuring decrease in performance when shuffling a given feature, and though the method does have some desirable properties \citep{ishwaran2007variable}, it has come under criticism for undesirable behavior in the context of correlated predictors \citep{hooker2021unrestricted}.
Empirical studies have found issues in both of these methods \citep{strobl2007bias,strobl2008conditional}.
Against this backdrop of negative results on the behavior of these initially suggested methods came contributions in general-purpose machine learning interpretability methods.
SHAP values \citep{lundberg2017unified} have been proposed as a general-purpose tool for understanding machine learning models.
However, they would prove especially popular for understanding tree-based methods where efficient algorithms have been proposed for estimating these otherwise combinatorially difficult quantities \citep{lundberg2019explainable,karczmarz2022improved}.

%\subsection{Linear Dimension Reduction and Dyadic Trees}
%
%Our primary concern is supervised tree-based methods, but it is worth a small detour to discuss unsupervised tree-based Vector-Quantization approaches, such as the kd-tree \cite{shan2015learning}.
%Linear dimension reduction is common in this space \cite{mcnames2001fast,freund2007learning}, which can adapt to the intrinsic dimension of the data \cite{verma2012spatial}.
%In this article, we will in contrast interested in learning a low dimensional representation that captures the relationship between the features and the response, rather than understanding structure within the features themselves.
%
%We also wish to distinguish between the goal in this paper, to use a tree to find perform a linear dimension reduction, from doing an analog to linear reduction in the space of trees as is found in \cite{wang2007object}.
%Though interesting in its own right, this does not accomplish our task of developing an understanding of the role of various features.

\subsection{Gradient-based Model Interpretation}\label{sec:bg_grads}

The gradient of a function tells us how its output is related to its input over short distances, and is a natural candidate to help explain the behavior of a model. 
\citet{hechtlinger2016interpretation} suggested to simply look at the gradient of a model evaluated at a particular observation to gain local understanding.
Another approach is the Integrated Gradient \citep{sundararajan2017axiomatic}, introduced in the context of neural networks, which involves privileging some setting of input features which is called the \textit{reference point}, which we'll denote $\x^*$.
Subsequently, in order to explain a prediction at a given point $\x$, we integrate the gradient along the path from $\x^*$ to $\x$ and multiply elementwise against that difference; that is, denoting the output of the neural network as $f$:  
\begin{equation}
    IG(\x) := (\x-\x^*) \odot \int_{\alpha=0}^1 \nabla f(\alpha\x + (1-\alpha)\x^*) d\alpha \, .
\end{equation}
This theoretically appealing approach has also seen practical success in applications including medicine \citep{sayres2019using} and chemistry \citep{mccloskey2019using}.

One could also integrate the gradient over a larger part of the space to perform a more global sensitivity analysis.
This is the idea behind the Active Subspace Method
\citep{constantine2015active}.
In our context, this involves defining the Active Subspace Matrix as:
\begin{equation}
    \mathbf{C}^f_{\mu} := \int_{[0,1]^P} 
    \nabla f(\x) \nabla f(\x)^\top d\mu(\x) \,,
\end{equation}
where $\mu$ is a measure.
Eigenanalysis on $\mathbf{C}^f_\mu$ reveals linear combinations of features which are important, similar to PCA.
Various techniques have been developed to estimate active subspaces in the presence only of input-output data. 
%\cite{hokanson2018data} used a Polynomial Ridge Approximation (PRA) to estimate a basis of the active subspace of a function.
%Ya boy also does that shit. 
%In a similar vein, \cite{tripathy2019deep,edeling2023deep} propose to estimate an active subspace by restricting the first layer of a deep neural network to have only a few neurons, a procedure they call the Deep Active Subspace Method (DASM).
A Polynomial Ridge Approximation (PRA) procedure \citep{hokanson2018data} is available, as is a closed form estimate using Gaussian processes \citet[GP;][]{wycoff2021sequential}, and also an approach based on neural networks called the Deep Active Subspace Method \citep[DASM;][]{tripathy2019deep,edeling2023deep} has been proposed.

\subsection{Variable Importance versus Sensitivity Analysis}

Though originating in different academic communities, the ideas of Variable Importance (from the Machine Learning community) and Sensitivity Analysis (from the Uncertainty Quantification community) are closely related. 
See \citet{antoniadis2021random} for a recent review of using regression trees for (non-gradient-based) sensitivity analysis.
\citet{vannucci2024enhancing} explicitly port ideas from the sensitivity analysis literature to enhance variable importance in tree-based methods.
\citet{elie2024random} use regression trees to estimate a form of sensitivity analysis based on quantiles.

\section{Integro-Differentiation of Regression Trees}
\label{sec:est}

\begin{wrapfigure}{r}{0.4\textwidth}
    \vspace{-1.5em}
    \centering
    \includegraphics[width=0.98\linewidth]{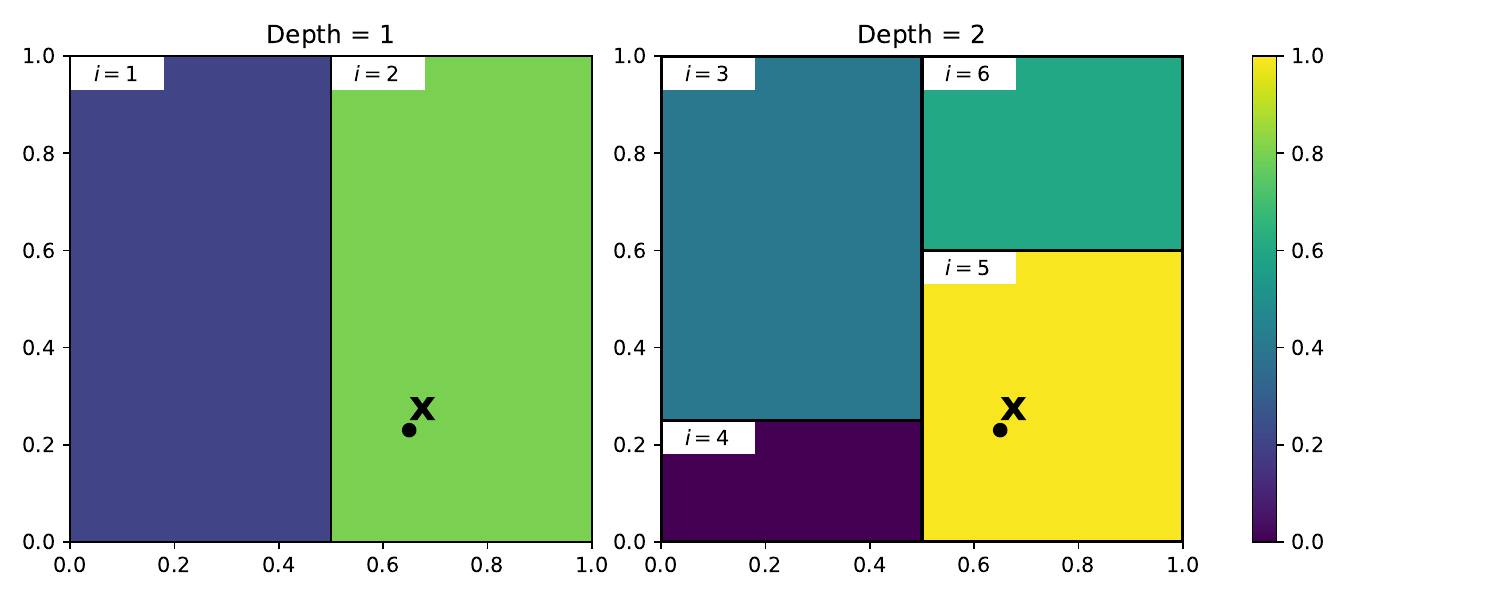}
    \caption{ \textbf{Illustration of Notation.} In the same example tree as Figure \ref{fig:main_idea} with a depth $K=2$, examples of our notation is as follows: 
     the indices of nodes at each depth are $\D_1 = \{1,2\}$ and $\D_2 = \{3,4,5,6\}$;
     the children of node $2$ are  $c^2_l = 5$ and $c^2_r = 6$;
     conversely the parent of node $5$ is given by $\rho_5 = 2$;
     the bounds of node $5$ are $\lb^5=[0.5,0]$ and $\ub^5 = [1,0.6]$;
     the value of intermediate node $2$ is $v_2 = 0.8$ and the value of leaf node $6$ is 0.6;
     since the ``root node" $0$ is split along the x-axis, $\sigma_0=1$ and since nodes $1$ and $2$ are split along the y-axis, $\sigma_1 = \sigma_2 = 2$;
     since the point $\mathbf{x}$ lies within the nodes $2$ and $5$ at depths 1 and 2 respectively, we have that $B^1(\x) = 2$ and $B^2(\x)=5$.}
    \label{fig:notation}
    \vspace{-0.5em}
\end{wrapfigure}

In this section we propose an easily computed estimator of derivatives and integrals of derivatives for regression trees.
While the underlying idea is simple, discussing it rigorously requires a fair amount of notation; Figure \ref{fig:notation} illustrates our notation on a simple regression tree.
Say we fit a regression tree of depth $K$ to data $(\x_n,y_n=f(\x_n)+\epsilon_n)$ with $\x_n\in[0,1]^P,y_n\in\mathbb{R}$ for $n\in\{1,\ldots,N\}$, and
$\epsilon_n$ mean zero with finite variance.
Purely for simplicity, we will assume that the tree has leaf nodes only at depth $K$, i.e. there is no path through the tree that ends before depth $K$.
Associate with every node from all depths an integer index $i$ such that the root node is labeled $0$ and if node $i$ is deeper in the tree than node $j$, then $i>j$.
That is, the indices at depth 1 are given by $\{1,2\}$, at depth 2 by $\{3,4,5,6\}$, and generally at depth $k$ the indices are given by $\N_k = \{2^{k-1}+1,\ldots, 2^k\}$.
For any depth $k$, we define the function $B^k:[0,1]^P\to\N_k$ where $B^k(\x)$ gives the index of the node at depth $k$ which contains $\x$.
We denote the variable along which the $i^\textrm{th}$ node splits by $\sigma_i\in\{1,\ldots,P\}$ and the threshold at which that split occurs as $\tau^i$.
Those points in node $i$ for which  $x_{\sigma_i}\leq \tau^i$ are mapped to the left child node of $i$, denoted $c^i_l$, while those for which $x_{\sigma_i} > \tau^i$ are mapped to the right child node, denoted $c^i_r$.
%The indices of the left and right children of node $i$ are given by $c^i_l$ and $c^i_r$, respectively.
For each node $i$ with child nodes, we denote the mean of the response variable in the left child node by $\hat\mu^i_l$ and in the right child node by $\hat\mu^i_r$.
That is, $\hat\mu^i_l$ gives the average of $y_n$ for all $n\in\{1, \ldots N\}$ such that $B^{D_i+1}(\x_n) = c_l^i$ where $D_i$ is the depth of node $i$ (and so $D_i+1$ is the depth of its child nodes).

\subsection{Differentiation}

The difference between $\hat\mu_l^i$ and $\hat\mu_r^i$, which contain the mean of the data on either side of the threshold for node $i$,
%which gives the mean of points on the left side of node $i$'s split  and the mean of the $y$ data $\hat\mu^i_r$ on the right side (where $x_{\sigma_i} > \tau^i$)
contains information about the $\sigma_i^\textrm{th}$ component of the gradient in that area.
By dividing this difference proportional to the size of the node along dimension $\sigma_i$, we form a quantity similar to a finite difference.
Let $[l^i_p,u^i_p]$ denote the extent of node $i$ along dimension $p$ for $p\in\{1, \ldots, P\}$. 
Then define the following quantity:
\begin{equation}
	%\frac{\tilde{\partial} f}{\tilde{\partial} x_{\sigma_i}} \Big|_i 
	\gamma_i
    := \frac{2(\hat\mu_r^i-\hat\mu_l^i)}
    {u^i_{\sigma_i}-l^i_{\sigma_i}}
    \approx
    \frac{\partial f}{\partial x_{\sigma_i}}(\x) \,\, \forall \x \in [\lb^i,\ub^i]
    \, .
\end{equation}
We have used the notation $\lb^i = [l^i_1,\ldots, l^i_P]$ to denote the vector of lower bounds of the extent of node $i$ and similar for $\ub^i$, and by $[\lb^i,\ub^i]$ we denote the set of points lying within the bounds of the $i^\textrm{th}$ node.
The ratio $\gamma_i $ contains information about the partial derivative of $f$ with respect to $x_{\sigma_i}$ (which is, again, the variable that node $i$ splits along) inside of $[\lb^i,\ub^i]$.
The idea that 
$\gamma_i
\approx
\frac{\partial f(\x)}{\partial x_{\sigma_i}} \big|_i $
for any $\x$ in node $i$ only makes sense if the gradient does not vary much within it.
We expect this to be the case only for very small nodes, or in other words, very deep trees, estimating functions with gradients which vary smoothly; we make this precise in the next section.

At a particular node, we can only form estimates of a single partial derivative.
However, if our tree is sufficiently deep, we can combine estimates of partial derivatives from nodes at multiple depths to form an estimate of the entire gradient.
Recall that $\N_k=\{2^{k-1}+1, \ldots, 2^{k}\}$ contains the index of all nodes of depth $k$ and let $\rho_i$ denote the index of the parent of node $i$ (which will be a member of $\N_{k-1}$).
Then Algorithm \ref{alg:TBDA} shows how to aggregate the partial derivative estimates into a gradient estimate.

\begin{algorithm}
\caption{Tree-Based Gradient Estimation}\label{alg:TBDA}
\begin{algorithmic}
 \STATE $\G^{i} \gets \mathbf{0} \in \mathbb{R}^P$ for all $i$. 
\FOR{$k \in \{1,\ldots,K\}$} 
    \FOR{$i \in \N_k$} 
        \STATE $\G^i \gets \G^{\rho_i}$ \COMMENT{Get parent's estimate} 
        \STATE $\G^{i}[\sigma_i] \gets \frac{2(\hat\mu^i_r-\hat\mu^i_l)}{u^i_{\sigma_i}-l^i_{\sigma_i}}$ \COMMENT{Update along split direction}
    \ENDFOR
\ENDFOR
\end{algorithmic}
\end{algorithm}

We thus have associated each node $i$ with an estimator of the gradient, though if we have not split along a variable $p$ in the tree upstream of $i$, the estimate of that component will simply be $0$.
This is actually somewhat reasonable, as the tree will split along variables with small gradient components less frequently.
%But rather than associating the estimate with a node of the tree, we may instead be interested in associating it with a point $\x$.

Of course, nothing would stop us from comparing values of means that are adjacent in the input domain but not adjacent in the tree structure 
(for example, nodes 3 and 6 of Figure \ref{fig:notation}).
However, the advantage of considering only nodes adjacent in the tree is that this Tree-Based Gradient Estimator (TBGE) can be computed efficiently merely by traversing the tree.
Furthermore, if we needed only an estimate of a gradient at a particular point $\x$, we would need only to traverse the regression tree in the standard manner, visiting only nodes upstream of the leaf node $\x$.
%We now convert our estimates of gradients within a node into estimates of gradients at a point $\x$.
To be explicit, if the tree is of depth $K$, recall that $B^K(\x)$ gives the index of the node of the greatest depth which contains $\x$.
We use as our estimate of $\nabla f(\x)$ the TBGE associated that node; that is, $\tilde{\nabla} f(\x) := \G^{B^K(\x)}$.

%We can see that once each feature has been iterated over once, $\G_i$ will be in some sense an estimator of the gradient within node $i$.
%Furthermore, in the large data limit, the analysis of Appendix $\ref{sec:iterate}$ reveals that this will occur within 
%$(P-1) \log_2\left(
%\frac{\max_p |\frac{\partial f(x)}{\partial x_p|}}{\min_p |\frac{\partial f(x)}{\partial x_p}|}
%\right)$
%splits when approximating linear functions.
%This suggests that depth should grow linearly with dimension, or least with the number of nonzero coefficients in the gradient. 

\subsection{Integration}

We next develop estimators for quantities of the form:
\begin{equation}
\label{eq:If}
    \mathcal{I}(f) = \int h(\nabla f(\x)) d\mu(\x) \,.
\end{equation}
Here, $h$ is some integrable $\mathbb{R}^P\to\mathcal{H}$ function and $\mu$ is a measure on $\mathbb{R}^P$. 
The Integrated Gradient at point $\mathbf{z}$ with reference point $\mathbf{z}^*$ may be written in this form
by choosing $h(\mathbf{a}) = (\mathbf{z}-\mathbf{z}^*)\odot \mathbf{a}$, $\mathcal{H}=\mathbb{R}^P$, and $\mu$ as the degenerate uniform measure on the line segment between $\mathbf{z}$ and $\mathbf{z}^*$.
Similarly, the Active Subspace matrix can also be seen to belong to this class by 
setting $h(\mathbf{a}) = \mathbf{a}\mathbf{a}^\top$, $\mathcal{H}=\mathbb{R}^{P\times P}$ and arbitrary $\mu$.

We will consider two classes of estimators for such quantities.
The first is a Monte Carlo Estimator (MCE) which is appropriate whenever $\mu$ is a probability measure which it is possible to sample from.
The second is a Partition-Based Estimator (PBE) which is suitable whenever we can compute $\mu([\mathbf{a},\mathbf{b}])$ for arbitrary $\mathbf{a}$, $\mathbf{b}$.

Start with the MCE.
We fix some Monte Carlo sample size $\mcs$ and form the standard Monte Carlo approximation, then plug in the TBGE where the gradient appears:
\begin{equation}
    \mathcal{I}(f)\approx 
    \frac{1}{\mcs}
    \sum_{\x_m\sim\mu}
    h(\nabla f(\x_m)) 
    \approx
    \frac{1}{\mcs}
    \sum_{\x_m\sim\mu}
    h(\tilde{\nabla} f(\x_m)) 
    \,.
\end{equation}
We denote this estimator by $\hat{\mathcal{I}}_{MC} (f)$.
In particular, we can form a Tree-Based Integrated Gradient (TBIG) as follows by sampling random uniform variables $u_m$ on $[0,1]$ and then forming:
\begin{equation}
\label{eq:ig}
    \hat{IG}(\x)
    =
    (\x-\x^*)\odot \frac1{M}
    \sum_{m=1}^M  \tilde{\nabla} f(u_m\x + (1-u_m)\x^*) \,.
\end{equation}

Next, we consider the PBE.
This involves simply computing a weighted average of the action of $h$ on the TBGE on each node at the penultimate depth $K-1$, weighted by the measure of that node.
For simplicity, we assume that $\mu([0,1]^P)=1$.
In notation, we define the estimator as
\begin{equation}
    \hat{\mathcal{I}}_{PE}(f) = 
    \sum_{i\in\N_{K-1}} 
    h(\G^i)\mu([\lb^i,\ub^i])
    \,.
\end{equation}
We can use this to form Tree-Based Active Subspace (TBAS) for $f$ as follows:
\begin{equation}
    \hat{C}^f_\mu 
    = 
    \sum_{i\in\N_{K-1}} 
    \G^i \G^{i\top} \mu([\lb^i,\ub^i]) \,.
\end{equation}

Which estimator is preferred in practice will depend on which properties of $\mu$ are computationally tractable.
The ability to sample from $\mu$ makes the MCE tractable while the ability to compute the measure of an arbitrary rectangle makes the PBE tractable.
If both are available, the PBE will be favorable as it does not have Monte Carlo error.

\section{Asymptotic Analysis}
\label{sec:theory}

%We'll now make rigorous the intuition developed in the previous section with a basic asymptotic analysis of the TBGE under the assumption that the mean value in a node converges at the usual square root rate, which is the case under typical regularity conditions.
We'll now make rigorous the intuition developed in the previous section with a basic asymptotic analysis of the TBGE under simplifying assumptions.
In particular, we assume that the split decisions are made on the basis of the feature variables only without reference to the outcome variables. 
Though this does not accommodate most algorithms used in practice for supervised learning, such as the the CART algorithm, these assumptions nevertheless are sufficient to develop some intuition as to qualitative properties of the approximation.
Additionally, similar simplifying assumptions have often been used in recent articles
\citep[e.g.][]{gao2022towards,ma2023decision,cai2023extrapolated, wen2022random,klusowski2021sharp}.

\begin{theorem}\label{thm:grad_conv}
%Let $S(N)$ denote the least number of splits in the tree along any variable as a function of the sample size.
%Under Assumption 1 of the Appendix, we have that:
%\begin{equation*}
%    \frac{\tilde{\partial} f(\x)}{\tilde{\partial} x_{\sigma_i}} = \frac{\partial f(\x)}{\partial x_{\sigma_i}}
%    + O_P\Big(
%    \sqrt{\frac{S(N)}{N}} + 2^{-\frac{S(N)}{P}}
%    \Big) \,.
%\end{equation*}
%    \frac{\partial f(\x)}{\partial x_{p}}
Let $S(N)$ denote the number of splits in the tree as a function of the sample size.
Let 
$
\frac{\tilde{\partial}^N f(\x)}{\tilde{\partial} x_{p}} 
$
denote the TBGE at point $\x\in[0,1]^P$ with sample size $N$.
Under Assumptions 0, 1, and 2 of the Appendix, we have that:
\begin{equation*}
    \frac{\tilde{\partial}^N f(\x)}{\tilde{\partial} x_{p}} 
    = 
    \frac{\partial f(\x)}{\partial x_{p}}
    +
    O_P\left(
    \frac{2^{\frac{P+2}{2P}S(N)}}{\sqrt{N}}
    +
    P2^{-\frac{S(N)}{P}}
    \right)
    %O_P\left(
    %\sqrt{\frac{S(N)}{N}} + 2^{-\frac{S(N)}{P}}
    %\right) 
    \,.
\end{equation*}
\end{theorem}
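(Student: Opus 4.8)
The plan is to control the error through a bias--variance decomposition at the single node responsible for the reported partial derivative. Fixing $\x$ and the coordinate $p$, let $i$ be the deepest node on the root-to-leaf path through $\x$ whose split direction is $\sigma_i = p$, so that by Algorithm~\ref{alg:TBDA} we have $\frac{\tilde\partial^N f(\x)}{\tilde\partial x_p} = \gamma_i$. Writing $R_i, L_i$ for the two children of $i$, $\bar f^i_R := \mathbb{E}[f(\x)\mid \x\in R_i]$ and $\bar f^i_L$ analogously, I introduce the \emph{oracle} finite difference $\bar\gamma_i := 2(\bar f^i_R - \bar f^i_L)/(u^i_{\sigma_i}-l^i_{\sigma_i})$, in which the population cell-averages of $f$ replace the empirical response means $\hat\mu^i_r,\hat\mu^i_l$. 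Then I would write
\begin{equation*}
\gamma_i - \frac{\partial f(\x)}{\partial x_p}
=
\bigl(\gamma_i - \bar\gamma_i\bigr)
+
\Bigl(\bar\gamma_i - \frac{\partial f(\x)}{\partial x_p}\Bigr),
\end{equation*}
and show that the first (estimation) term contributes the $2^{\frac{P+2}{2P}S(N)}/\sqrt N$ rate and the second (approximation) term the $P2^{-S(N)/P}$ rate.

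For the approximation term I would Taylor-expand $f$ about the centroid $c$ of node $i$, using the smoothness of $f$ from Assumption~0. Since $R_i$ and $L_i$ share the same extent in every coordinate $q\neq p$, their centroids differ only in coordinate $p$ --- by roughly half the node width, which is precisely the origin of the factor $2$ in $\gamma_i$ --- so the first-order term collapses to $\partial_p f(c)$ and all components $q\neq p$ cancel. The remainder is a curvature term bounded by the Hessian bound times $\sum_q(\text{side}_q)^2$; dividing by the $\Theta(\text{side}_p)$ width leaves an error of order $\diam([\lb^i,\ub^i])$. Under the balanced splitting rule of Assumption~1 the $S(N)$ splits above the leaf are shared roughly evenly among the $P$ coordinates, so each side length is $\Theta(2^{-S(N)/P})$; summing the $P$ coordinate contributions and adding the within-cell variation $\partial_p f(c)-\partial_p f(\x)$ (also $O(\diam)$ by the Lipschitz gradient) yields the $O(P2^{-S(N)/P})$ bound.

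For the estimation term I would condition on the partition, which by Assumption~1 depends on the features alone and is therefore independent of the noise $\epsilon_n$. I decompose $\hat\mu^i_r - \bar f^i_R$ into a noise average $|R_i|^{-1}\sum_{n\in R_i}\epsilon_n$ and a sampling fluctuation $|R_i|^{-1}\sum_{n\in R_i}(f(\x_n)-\bar f^i_R)$; both are averages of $|R_i|$ i.i.d.\ mean-zero terms of finite variance, hence of order $|R_i|^{-1/2}$ in probability. The cell count satisfies $|R_i| = \Theta(N\,\mu([\lb^i,\ub^i])) = \Theta(N2^{-S(N)})$, which I would establish by a Bernstein/Chernoff concentration argument using the bounded feature density of Assumption~2. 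This gives a numerator error $\hat\mu^i_r-\hat\mu^i_l$ of order $2^{S(N)/2}/\sqrt N$; dividing by the width $\Theta(2^{-S(N)/P})$ produces $O_P\bigl(2^{S(N)(1/2+1/P)}/\sqrt N\bigr) = O_P\bigl(2^{\frac{P+2}{2P}S(N)}/\sqrt N\bigr)$, as claimed.

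The main obstacle is that the partition is random and data-dependent, so the clean cell geometry invoked above --- side lengths $\Theta(2^{-S(N)/P})$ and counts $\Theta(N2^{-S(N)})$ --- must be secured as a high-probability event rather than assumed, and the finite-difference bias must be reconciled with this random geometry. This is exactly where the response-independence of the splits (decoupling $\epsilon$ from the cells) and the regularity of the feature density become indispensable: I expect the bulk of the technical effort to lie in showing that the empirical centroids and cell counts concentrate uniformly across all cells, so that the per-node bounds can be promoted to the stated $O_P$ statement without a union-bound blow-up that would spoil the rate.
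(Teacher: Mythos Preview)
Your proposal is correct and follows essentially the same bias--variance decomposition as the paper: the paper Taylor-expands $f$ directly about $\x$ (rather than the centroid, which spares you the extra $\partial_p f(c)-\partial_p f(\x)$ step), bounds the second-order remainder by $H\Vert\ub^i-\lb^i\Vert^2/(u^i_p-l^i_p)$, and invokes Assumptions~1 and~2 to obtain the two rates. Your anticipated ``main obstacle'' largely evaporates, since the statement is pointwise at a single $\x$ (so no union bound over cells is needed) and the high-probability cell geometry and the $\sqrt{N}$ rate for the cell means are taken as assumptions rather than proved.
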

%\begin{proof}
%    We give only an outline of the proof, with the details given in the Appendix.
%    Expanding $f$ around $x$ can be used to show that
%    \begin{equation*}
%        \frac{\tilde\partial f(\x)}{\tilde\partial x_{p}} = \frac{\partial f(\x)}{\partial x_{\sigma_i}}
%        + O_P\Bigg(\frac1{\sqrt{\eta(N)}}\Bigg)
%        + o(\diam([\lb,\ub]) \,,
%    \end{equation*}
%    where $\eta(N)$ is the least number of observations in any node.
%    Then, noting that $\eta(N)\leq \frac{N}{S(N)}$ and measuring the decrease in node width with depth yields the desired result.
%%Taken together, this yields Theorem \ref{thm:grad_conv}.
%\end{proof}

In order for that error term to vanish, we need to choose $S(N)$ growing even slower than $\log N$, such as $P\log\log N$ (see Appendix for more discussion).
This choice yields an error rate of the gradient estimates of $O_P(P(\log N)^{-1})$, which is slow relative to typical estimation rates.
This suggests that this gradient estimator may be most useful in situations where a coarse estimate is acceptable.
The reason that this rate needs to be so slow is that on the one hand, the number of datapoints in each cell needs to diverge in order for the statistical error to vanish. 
But on the other hand, the Taylor series approximation error in the finite difference analog requires for the size of the cells to vanish, which implies an exploding number of cells. 
These factors combine to require a large number of data to achieve small errors.
However, they leave open the possibility of leveraging the computational efficiency of regression trees to quickly create approximate gradient estimates in large samples, as our numerical experiments will show.

%This shows us the trade-off between decreasing the variance of the mean estimate in each cell and narrowing the sizes of the cells.
%Taking $S(N) = P \log N$ yields a convergence rate of $\sqrt{PN^{-1}\log N}$.

In the remainder of this section, we will establish consistency of estimates for integro-differential quantities, which will require only consistency of the TBGE, which we have just established.
%We see that this occurs whenever $S(N)$ grows to $\infty$ sublinearly such that there are eventually arbitrarily many splits, each with arbitrarily many observations as the sample size increases.
%We make these assumptions precise in the Appendix.
We begin with the PBE; the remaining results are in the Appendix.

\begin{theorem}
\label{thm:pbe}
Let $h:\mathbb{R}^P\to\mathbb{R}^P$ be bounded in the sense that $h(\z) \leq C_h \Vert\z\Vert^{a_h}$ for $a_h>0$.
Under Assumptions 0, 1 and 2 of the appendix, the Partition-Based estimator converges to the true integro-differential quantity as the sample size diverges. That is,
\begin{align*}
    \lim_{N\to\infty} \hat{\mathcal{I}}_{PB}(f) = \mathcal{I}(f) \,.
\end{align*}
where $\mathcal{I}(f)$ is the integro-differential quantity given in Equation \ref{eq:If}.
\end{theorem}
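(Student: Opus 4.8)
The plan is to reduce the convergence of the Partition-Based estimator to the already-established consistency of the TBGE (Theorem~\ref{thm:grad_conv}) and then to promote that pointwise convergence to convergence of the integral. The key structural observation is that the estimator is secretly an integral of the transformed TBGE. Because the tree is complete to depth $K$, the cells $\{[\lb^i,\ub^i]\}_{i\in\N_{K-1}}$ at the penultimate depth partition $[0,1]^P$, and since the leaves do not split, for any $\x$ lying in cell $i\in\N_{K-1}$ the estimate $\tilde\nabla f(\x)=\G^{B^K(\x)}$ coincides with the constant $\G^i$. Consequently
\begin{equation*}
  \hat{\mathcal{I}}_{PB}(f)
  = \sum_{i\in\N_{K-1}} h(\G^i)\,\mu([\lb^i,\ub^i])
  = \int_{[0,1]^P} h\bigl(\tilde\nabla f(\x)\bigr)\, d\mu(\x),
\end{equation*}
so that the error is exactly $\hat{\mathcal{I}}_{PB}(f)-\mathcal{I}(f)=\int_{[0,1]^P}\bigl[h(\tilde\nabla f(\x))-h(\nabla f(\x))\bigr]\,d\mu(\x)$. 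This identity is what turns a statement about a weighted sum into a statement about a single integrand.

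Next I would argue pointwise convergence of that integrand. Fix $\x$; choosing $S(N)$ to grow slowly enough (e.g.\ $S(N)\asymp P\log\log N$, as discussed after Theorem~\ref{thm:grad_conv}) makes the error term in Theorem~\ref{thm:grad_conv} vanish, so $\tilde\nabla f(\x)\to_P\nabla f(\x)$ componentwise and hence jointly. Since $h$ is continuous (as it is for the two motivating choices $h(\av)=\av\av^\top$ and $h(\av)=(\z-\z^*)\odot\av$), the continuous mapping theorem gives $h(\tilde\nabla f(\x))\to_P h(\nabla f(\x))$ for each fixed $\x$. The remaining task is to interchange this limit with the integral over $\x$.

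The main obstacle is precisely this interchange, because Theorem~\ref{thm:grad_conv} supplies only convergence in probability, so with small probability $\tilde\nabla f(\x)$ can be far from $\nabla f(\x)$, and I must ensure these rare events do not inflate the integral. Here the growth hypothesis $\Vert h(\z)\Vert\le C_h\Vert\z\Vert^{a_h}$ does the work. Since $f$ is smooth on the compact cube, $\nabla f$ is bounded and $h(\nabla f(\x))$ is dominated by a constant. For the estimated side I would establish a uniform-integrability (Vitali) bound: each component of $\tilde\nabla f(\x)$ is a scaled difference of cell means $\gamma_i=2(\hat\mu_r^i-\hat\mu_l^i)/(u^i_{\sigma_i}-l^i_{\sigma_i})$, whose moments are controllable from the finite-variance noise and the boundedness of $f$ under Assumptions~0--2; combined with the polynomial growth of $h$ this yields uniform integrability of $\{h(\tilde\nabla f(\x))\}_N$, and the generalized (Vitali) convergence theorem lets me pass the limit inside the integral, giving convergence in probability of $\hat{\mathcal{I}}_{PB}(f)$ to $\mathcal{I}(f)$.

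Alternatively — and perhaps more cleanly — I would upgrade the pointwise result to a uniform one. Because $\tilde\nabla f$ takes only finitely many distinct values (one per penultimate cell), a union bound over that slowly growing collection turns the per-cell guarantee of Theorem~\ref{thm:grad_conv} into $\sup_{\x}\Vert\tilde\nabla f(\x)-\nabla f(\x)\Vert\to_P 0$. All gradient values then lie, with high probability, in a fixed compact ball on which $h$ is uniformly continuous, so $\sup_{\x}\Vert h(\tilde\nabla f(\x))-h(\nabla f(\x))\Vert\to_P 0$, and the error integral is bounded by this supremum times $\mu([0,1]^P)=1$. The delicate point in either route is the same: the union-bound step requires extracting tail (not merely $O_P$) bounds on each $\gamma_i$ from the proof of Theorem~\ref{thm:grad_conv}, while the uniform-integrability step leans on the given growth bound; in both cases one relies on $S(N)$ growing slowly enough that the cell-count penalty is dominated by the per-cell accuracy.
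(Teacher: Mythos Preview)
Your structural reduction --- writing $\hat{\mathcal{I}}_{PB}(f)=\int h(\tilde{\nabla} f)\,d\mu$ and invoking Theorem~\ref{thm:grad_conv} for pointwise convergence of the integrand --- is exactly how the paper begins. The routes diverge at the interchange step. The paper does not use Vitali or a union bound over cells; it produces a deterministic dominating constant directly. Since $f$ is $C^2$ on the compact cube it is Lipschitz with some constant $L$, and for any node $i$ the difference of (population) child-cell means is bounded in magnitude by $L$ times the node's diameter; dividing by the node width along $\sigma_i$ gives $|\gamma_i|\le 2PL$ uniformly over all nodes and depths, hence $\Vert\G^i\Vert\le 2P^2L$ and $\Vert h(\G^i)\Vert\le C_h(2P^2L)^{a_h}$, a constant majorant, after which the ordinary Dominated Convergence Theorem finishes in one line. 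The paper's route buys simplicity: no moment control, no cell counting, no extraction of tail bounds from the $O_P$ rate. Your routes buy a more honest treatment of the randomness, since the paper's Lipschitz bound is derived for the \emph{population} cell means $\mu_r^i,\mu_l^i$ rather than for the noisy sample means $\hat\mu_r^i,\hat\mu_l^i$ that actually enter $\G^i$; your uniform-integrability argument, if completed, would close that gap rather than elide it.
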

Theorem \ref{thm:mce} of the Appendix gives a similar result for the MCE.
We also explicitly state there the implications of these results for Active Subspace and Integrated Gradient estimation, namely the consistency of the estimators $\hat{C}^f_\mu$  and $\hat{IG}$ proposed in Section \ref{sec:est}.

We have thus established consistency of gradient-based model interpretation quantities for regression trees. 
Next, we will empirically study their behavior in finite samples via a battery of numerical experiments.

\section{Numerical Experiments}\label{sec:num}

We now study how the proposed gradient estimator might be profitably exploited in practice. 
We begin with a qualitative study showing how a tree-based integrated gradient (TBIG) can facilitate local model interpretation.
Then come three quantitative studies, first investigating the potential of a Tree-Based Active Subspace (TBAS) to improve prediction accuracy of a downstream tree via a rotation of the space. 
Subsequently, we evaluate the capacity of regression trees to estimate the Active Subspace in low and high dimension.
%On these problems, we use the estimator of Corollary 1, which we term the Tree-Based Active Subspace (TBAS) estimator.
Finally, we end with another qualitative study demonstrating how a TBAS can provide data visualization.

\subsection{Integrated Gradient for Tree-Based Methods}

\begin{figure}[h]
    \centering
    \begin{tabular}{c|c}
         \includegraphics[width=0.48\textwidth]{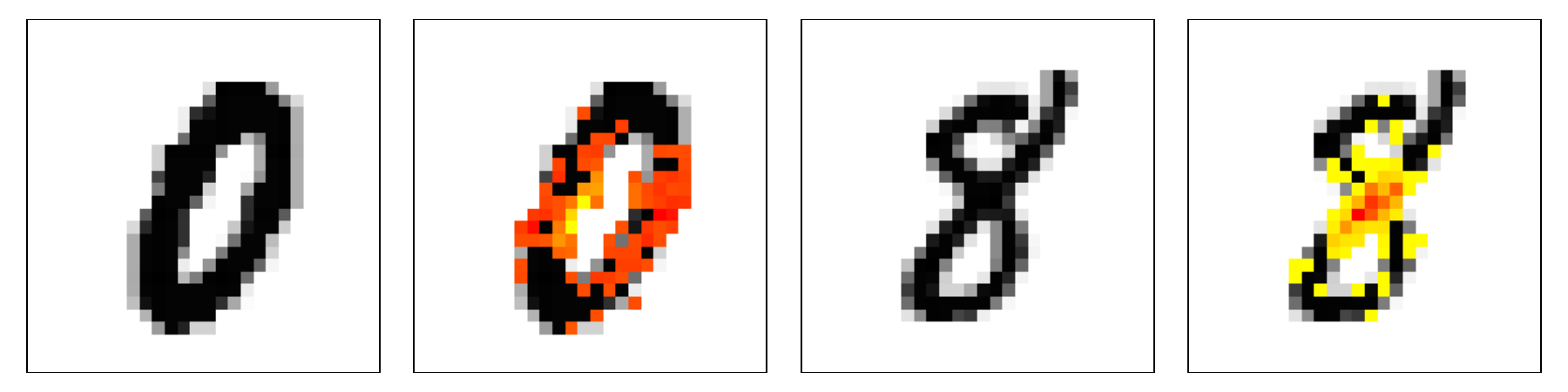}
         &  
         \includegraphics[width=0.48\textwidth]{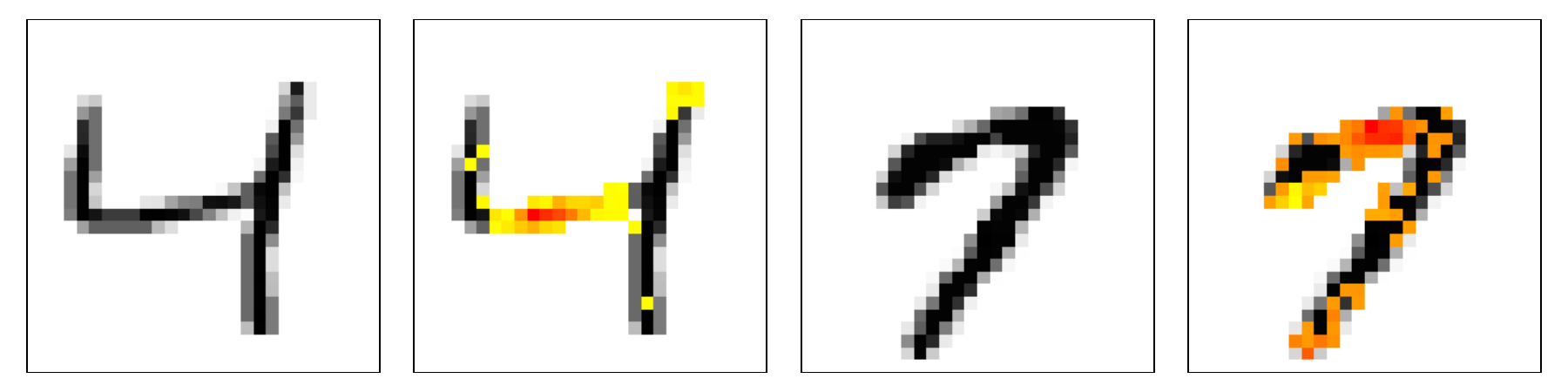}
    \end{tabular}
    %\includegraphics[width=0.49\textwidth,trim={0 0 48em 0},clip]{images/ig08.pdf}
    %\begin{tabular}{|c|c|}
    %    \hline
    %    \adjincludegraphics[width=0.22\textwidth,trim={0 0 {.5\width} 0},clip]{images/ig08.pdf} & \adjincludegraphics[width=0.22\textwidth,trim={{.5\width} 0 0  0},clip]{images/ig08.pdf} \\
    %    \hline
    %    \adjincludegraphics[width=0.22\textwidth,trim={0 0 {.5\width} 0},clip]{images/ig47.pdf} & \adjincludegraphics[width=0.22\textwidth,trim={{.5\width} 0 0  0},clip]{images/ig47.pdf} \\
    %    \hline
    %\end{tabular}
    \caption{\textbf{Integrated Gradient for Trees.} Each pair of panels gives a training example from MNIST. The second pair in the image superimposes the IG values onto the example. Redder means more strongly suggesting correct class membership.}
    \label{fig:ig}
\end{figure}

We now consider a random forest classifier fit to two subsets of the MNIST dataset, one contrasting zeros against eights and the other fours against sevens.
While classification was not the focus of this article, we present some preliminary numerical experiments in Appendix \ref{app:class}.
We use the estimator $\hat{IG}$ of Equation \ref{eq:ig} with $M=500$ random points along a given path.
Figure \ref{fig:ig} gives the results of this analysis.
We see that when comparing eights against zeros, the intersection point of the eight is most important, while the left and right sides of the zero are.
This makes sense as they are the parts of each digit which are distinct.
By contrast, the shared upper and lower arcs are not highlighted.
In the four versus seven case, the location of the horizontal bar seems to be most influential in determining one versus the other. 
This suggests that the model is comparing the relative position of the bar to the rest of the digit in making its classifications.

\subsection{Active-subspace Rotated Trees for Predictive Analysis}\label{sec:num_pred}

\begin{table*}
\centering
\begin{tabular}{lllllllll}
\toprule
\textbf{Dataset:} & bike & concrete & gas & grid & keggu & kin40k & obesity & supercond \\
\midrule
\multicolumn{9}{c}{\textbf{Regression Tree (Depth 4)}}\\
\midrule
TBAS & \textbf{0.635} & \textbf{0.47} & \textbf{0.578} & \textbf{0.688} & \textbf{0.194} & \textbf{0.856} & \textbf{0.128} & \textbf{0.511} \\
Id & \textbf{0.655} & 0.537 & 0.595 & 0.773 & 0.35 & 0.963 & 0.226 & \textbf{0.514} \\
PCA & \textbf{0.655} & 0.543 & \textbf{0.591} & 0.773 & 0.349 & 0.964 & 0.226 & \textbf{0.513} \\
Rand & 0.656 & 0.524 & 0.593 & 0.752 & 0.349 & 0.95 & 0.226 & \textbf{0.513} \\
\midrule
%%%%%%%%%%%%%%%%%%%%%%%%%%%%%%%%%%%%%%%%%%%%%%%%%%%%%%%%%%%%%%%%%
\multicolumn{9}{c}{\textbf{Regression Tree (Depth 8)}}\\
\midrule
TBAS & \textbf{0.402} & \textbf{0.35} & \textbf{0.429} & \textbf{0.521} & \textbf{0.078} & \textbf{0.586} & \textbf{0.094} & \textbf{0.392} \\
Id & \textbf{0.405} & 0.403 & \textbf{0.432} & \textbf{0.522} & 0.121 & 0.862 & \textbf{0.103} & \textbf{0.395} \\
PCA & \textbf{0.407} & \textbf{0.395} & \textbf{0.423} & \textbf{0.524} & 0.122 & 0.872 & \textbf{0.107} & \textbf{0.392} \\
Rand & \textbf{0.411} & \textbf{0.391} & \textbf{0.435} & 0.55 & 0.124 & 0.836 & \textbf{0.109} & \textbf{0.397} \\
\midrule
%%%%%%%%%%%%%%%%%%%%%%%%%%%%%%%%%%%%%%%%%%%%%%%%%%%%%%%%%%%%%%%%%
\multicolumn{9}{c}{\textbf{Random Forest (Depth 4)}}\\
\midrule
TBAS & \textbf{0.609} & \textbf{0.406} & \textbf{0.559} & \textbf{0.602} & \textbf{0.161} & \textbf{0.802} & \textbf{0.123} & \textbf{0.479} \\
Id & 0.65 & 0.462 & \textbf{0.574} & 0.659 & 0.344 & 0.954 & 0.173 & \textbf{0.485} \\
PCA & 0.649 & 0.462 & \textbf{0.567} & 0.654 & 0.344 & 0.954 & 0.174 & \textbf{0.486} \\
Rand & 0.646 & 0.458 & \textbf{0.57} & 0.66 & 0.33 & 0.938 & 0.173 & \textbf{0.486} \\
\bottomrule
\end{tabular}
\vspace{0.3em}
\caption{\textbf{Predictive Impact of Various Transformations on Selected Datasets}. Numbers give 100-fold RMSE; bold indicates confidence interval overlaps with the lowest confidence interval.} %ASM is the tree-estimated active subspace method proposed here. Id is the identity mapping, hence just regressing the original data. PCA is a principal components analysis and Rand are directions drawn uniformly at random.}
\label{tab:rot}
\end{table*}

This section evaluates the ability of TBAS to improve the accuracy of a downstream predictive analysis.
Given some mapping $\LL$, we refer to postmultiplication of the feature matrix $\X$ to form a new feature matrix $\Z:=\X\LL$ as a rotation.
In standard tree-based methods, non-diagonal rotations can have an impact on predictive performance because the axes along which splits are made have been changed.
In order to quantitatively assess the utility of TBAS, we compare the prediction error of regression trees and random forests on eight benchmark datasets fit on data augmented by a rotation of the space. 
A TBAS rotation is conducted by choosing $\LL$ to be a matrix square root of the active subspace matrix estimate, following \cite{wycoff2022sensitivity}.
We compare TBAS rotations to the those made by drawing orthogonal directions uniformly over the Grassmannian (Rand; similar to \citet{breiman2001random}) as well as those formed from a Principal Components Analysis (PCA; similar to \citet{rodriguez2006rotation}) and no rotation at all (denoted Id).
Table \ref{tab:rot} presents the results of this analysis. 
TBAS does at least as well as the other methods, and sometimes offers significant improvement (such as on the Kin40k and Keggu datasets).

\subsection{Computationally Efficient Active Subspace Estimation in Low Dimension}\label{sec:num_smol_active}

\begin{figure}
    \centering
    \includegraphics[width=0.8\textwidth]{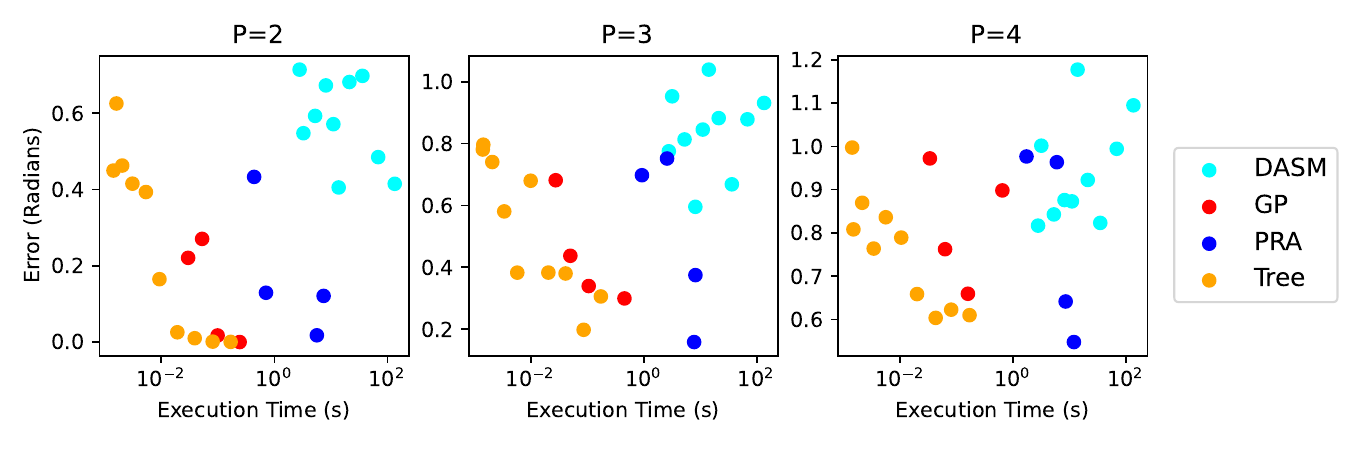}
    \caption{\textbf{Active Subspace Estimation in Low Dimension.} Execution time (x-axis) and Subspace Estimation Error (y-axis) for the four methods, lower is better.}
    \label{fig:smol_active}
\end{figure}
In this section we demonstrate the capability of TBAS to offer extremely fast, gradient-free estimation of the active subspace in low dimension when significant data are available. 
We compare to the three popular methods for gradient-free active subspace estimation introduced in Section \ref{sec:bg_grads}, based on a Gaussian Process (GP), Ridge Polynomial (PRA), and Neural Network (DASM).
Each of these methods was tasked to estimate a one dimensional active subspace in dimensions 2, 3 and 4, using a logarithmically spaced grid of sample sizes ranging from 10 to 10,000 on a toy algebraic function.
We found that the GP and PRA methods worked efficiently when sample sizes were small, but computation time grew quickly, such that we were only able to run these methods for samples of size 150 or less.
Figure \ref{fig:smol_active} shows the active subspace error against the execution time.
We see that the tree-based estimator forms the majority of the Pareto front in all three cases.

\subsection{Estimating Sparse Active Subspaces in High Dimension}

%While PCA is a powerful tool for constructing visualizations, since the loadings of variables onto factors are in general nonzero, each factor is associated with all variables, which can make qualitative interpretation of the factors themselves difficult.
%Sparse PCA \cite{zou2006sparse} aims to improve this situation by estimating components which have most of their variable loadings set identically to zero. 
%\cite{zou2006sparse} achieve this by penalizing the $\ell_1$ norm of the factors to be small. 
%Conceptually, we could achieve a similar result in an active subspace analysis by penalizing the absolute values of the coordinates of the eigenvectors in the standard basis.

Imposing sparsity in the coefficients of linear dimension reduction can improve interpretability \citep{zou2006sparse}.
Because of the manner that tree-based methods produce gradient estimates, we conjecture that TBAS has built-in inductive bias to favor entry-wise active subspaces.
%Our analysis in Appendix \ref{sec:iterate} revealed that in the large sample limit, the tree-based gradient estimator will behave as though it was in a lower dimensional space, with dimension given by the cardinality of the gradient.
In this section, we will investigate this possibility by comparing TBAS against the DASM in estimating a sparse active subspace in dimensions 10, 50 and 100.
We will use the same setup as the previous section, except that the true subspace has only three nonzero coordinates.
Because of the sample size that is required to estimate an active subspace in high dimension, it is computationally infeasible to deploy the GP or PRA active subspace estimation methods.
Figure \ref{fig:big_active} shows the results of this analysis. 
The TBAS estimator is able to achieve much better accuracy with a significantly smaller cost.

\begin{figure*}
    \centering
    \includegraphics[width=0.99\textwidth]{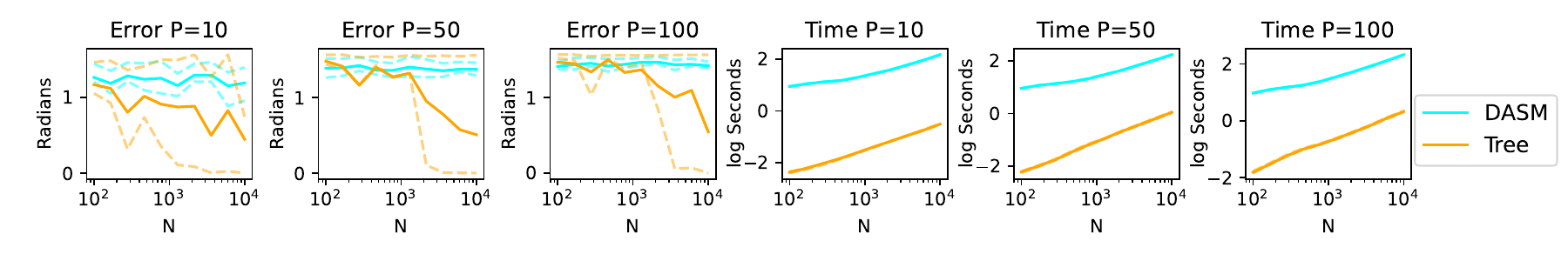}
    \caption{\textbf{Sparse Active Subspace Estimation in High Dimension.}
    Lower is better.
    }
    \label{fig:big_active}
\end{figure*}

\subsection{Dimension Reduction with the Active Subspace}

We now turn to the qualitative benefits of performing an active subspace analysis using TBAS.
We used the NHEFS dataset of biochemistry tape and mortality data which were studied by \citet{lundberg2019explainable}.
This consisted of a dataset of 14,407 observations and 90 complete variables.
We fit a TBAS to this dataset using a regression tree of depth 15, requiring at least 10 samples per leaf.
We subsequently performed an eigendecomposition of the estimated active subspace matrix.
Like \citet{lundberg2019explainable}, we found that age was the most important variable and it mapped cleanly onto the first active subspace dimension.
However, we found that the next most important dimension consisted of many variables (see Appendix \ref{app:mortality}), such that they would have been difficult to capture with a typical variable-importance analysis.
Figure \ref{fig:eig} shows the result of this active subspace analysis, which reveals that after the second eigenvalue there appears to be a gap in the spectrum of the active subspace matrix, indicating the presence of an active subspace of dimension two (left panel).
The middle panel shows a projection of the data, which breaks into two clusters along the second principal component, while the right panel shows a projection of randomly sampled points to visualize the predictive surface of the function.
The right panel reveals the predictive surface has a fairly simple, ``S-shaped" form over these two dimension.
It would not be possible to detect this by considering only main effects or two factor interactions.

\begin{figure*}
    \centering
    \includegraphics[width=0.95\textwidth]{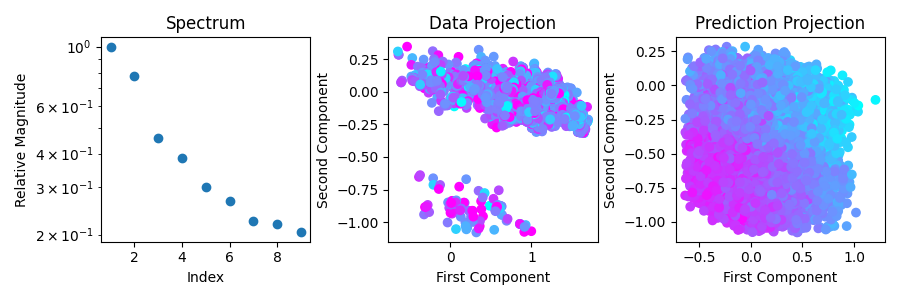}
    \caption{
    \textbf{Projection with TBAS.}
    \textit{Left:} Spectrum of active subspace matrix suggests a 2D active subspace. 
    \textit{Middle:} Projection of data onto active subspace.
    \textit{Right:} Projection of predictive surface.
    }
    \label{fig:eig}
\end{figure*}

\section{Discussion}\label{sec:disc}

\textbf{Summary:} We studied a simple method for estimating gradients in sufficiently deep regression trees on large datasets. 
Subsequently, we demonstrate how these estimates could be used to calculate active subspaces and integrated gradients.
We found significant improvement in predictive performance could be achieved by using the tree-based active subspace to rotate the space, and also found that these estimates executed quicker than existing gradient-free active subspace estimators and had useful inductive bias towards detecting coordinate-sparse subspaces. 
%Also, we were not able to run the PRA and GP based active subspace estimators on larger datasets due to computational constraints.
Finally, we used these gradient estimates to reveal the simple global structure of a tree-based model fit to a complex dataset as well as the mechanism by which a random forest conducted MNIST classification.

\textbf{Limitations:} When it comes to limitations of our experiments, when comparing our active subspace estimates to the DASM, we used a particular neural network architecture and optimizer. 
It is possible that a different configuration would have performed better. 
Additionally, when it comes to limitations of the method we studied, our analysis suggests that it would take a very deep regression tree to estimate gradients to high accuracy in high dimension.

\textbf{Conclusions:} We think this work offers two high level conclusions. 
First, that trees may have a place in the field of Uncertainty Quantification, which has more commonly used differentiable but slow to estimate surrogates such as Gaussian processes and neural networks. 
Secondly, we hope that this work can also enable improve cross-pollination between developments in interpretability for neural networks and for regression trees by creating analogs for gradient-based neural network techniques.

We also would like to comment on how TBAS fit into the existing literature on interpretability in trees. 
It is somewhat distinct from the axiomatic approach to interpretability proffered by SHAP \citep{lundberg2017unified} and Integrated Gradients \citep{sundararajan2017axiomatic}. 
While an axiomatic approach can be useful, its universality should not be exaggerated. 
\citet{hancox2021epistemic} write about how it is unlikely that any variable selection method could satisfy all possible demands, no matter how attractive its axiomatic foundation.
Since the active subspace is parameterized by a measure, it actually consists of an entire family of analyses, with different choices of emphasis over the input space possibly leading to different conclusions. 

\textbf{Future Work:} 
We are excited about the future work opened up by the here-proposed gradient estimates. 
First, we suspect that part of the reason for the slow convergence rate of the gradient estimator is that we ``start from scratch" at every depth of the tree, overwriting the previous estimator. 
Perhaps an improved convergence rate could be obtained by combining the gradient estimate from multiple depths rather than overwriting it.
Next, while we have conducted some preliminary analyses in Appendix \ref{app:class}, more work is needed to understand the properties of these estimates in classification problems with categorical inputs and missing data. 
We are also interested in the possibility of estimating higher order derivatives from tree structure. 
%Additionally, there is much room to improve the asymptotic mathematical analysis provided here, and in particular a finite sample analysis is interesting.
Furthermore, extending the class of functions approximated to nondifferentiable functions is of interest: does the quantity proposed here converge in such a case, and to what?
But we are most excited about the potential to bring in other gradient-based techniques to tree-based methods.
For instance, physics-informed machine learning (e.g. \citet{raissi2019deep,raissi2017machine}) has been making waves in the fluid dynamics community  \citep{cai2021physics} among other applications, where they allow the analyst to use information about function derivatives to improve predictions of differentiable models such as neural networks or Gaussian processes. 
Our gradient estimator opens up the possibility of deploying this technique in the context of regression trees.

\begin{ack}
    
I gratefully acknowledge funding from the NSF and NGA via DMS\#2428033 as well as the NSF and AFOSR via DMS\#2529277.
The computational resources for this work were provided by the Unity Research Computing Platform, a multi-institutional cluster lead by University of Massachusetts Amherst, the University of Rhode Island, and University of Massachusetts Dartmouth. 
This project was inspired by discussions at the 2024 AISTATS conference.
%I also want to acknowledge the 2024 AISTATS conference in Valencia, the engaging discussion of which was the genesis of this project.

\end{ack}

%% Acknowledgements should only appear in the accepted version.
%\section*{Acknowledgements}
%
%\textbf{Do not} include acknowledgements in the initial version of
%the paper submitted for blind review.
%
%If a paper is accepted, the final camera-ready version can (and
%usually should) include acknowledgements.  Such acknowledgements
%should be placed at the end of the section, in an unnumbered section
%that does not count towards the paper page limit. Typically, this will 
%include thanks to reviewers who gave useful comments, to colleagues 
%who contributed to the ideas, and to funding agencies and corporate 
%sponsors that provided financial support.

%\section*{Impact Statement}
%
%This paper presents work whose goal is to advance the field of 
%Machine Learning. There are many potential societal consequences 
%of our work, none which we feel must be specifically highlighted here.

\bibliography{example_paper}
\bibliographystyle{apalike}

%%%%%%%%%%%%%%%%%%%%%%%%%%%%%%%%%%%%%%%%%%%%%%%%%%%%%%%%%%%%%%%%%%%%%%%%%%%%%%%
%%%%%%%%%%%%%%%%%%%%%%%%%%%%%%%%%%%%%%%%%%%%%%%%%%%%%%%%%%%%%%%%%%%%%%%%%%%%%%%
% APPENDIX
%%%%%%%%%%%%%%%%%%%%%%%%%%%%%%%%%%%%%%%%%%%%%%%%%%%%%%%%%%%%%%%%%%%%%%%%%%%%%%%
%%%%%%%%%%%%%%%%%%%%%%%%%%%%%%%%%%%%%%%%%%%%%%%%%%%%%%%%%%%%%%%%%%%%%%%%%%%%%%%
\newpage
\appendix
\onecolumn

%So the cost associated with a variable $p_1$ will be smaller than that associated with $p_2$ when:
%\begin{equation}
%    a^2_{p_1}(u_{p_1}-l_{p_1})^2 > 
%    a^2_{p_2}(u_{p_2}-l_{p_2})^2 
%    \iff 
%    \frac{|a_{p_1}|}{|a_{p_2}|} > 
%    \frac{u_{p_2}-l_{p_2}}{u_{p_1}-l_{p_1}} \, .
%\end{equation}

%So we see that there is a relationship between the inverse width of the leaf widths and the magnitude of the linear function coefficients.

%Each split reduces the length of the box by half along the given variable. 

%Therefore, we see that the first split will be on the largest

%For the first $P \log_2\Big(\frac{\max_p |a_p|}{\min_p |a_p|}\Big)$ splits, the procedure will split in a different manner depending on the values of $\mathbf{a}$. 

\newpage

\section{Proofs}
\label{app:proofs}

We begin by stating and discussing our assumptions.
%Assumption 1 will be required for Theorem \ref{thm:grad_conv}, establishing the convergence rate of our gradient estimator.
%The other results, regarding consistency of integro-differential estimators, will require Assumption 2 as well.

A major difficulty in establishing theoretical results for tree-based methods is establishing the behavior of the thresholds as a function of the data generating process.
%Many recent articles have investigated toy models of tree-based methods to establish theoretical properties [CITE], often by assuming that the tree structure is given in a data-independent manner.
For our purposes, the critical assumption is that the volume of the leaves shrink at an exponential rate in the number of splits made. 
We will study a scheme which achieves this by cycling between variables to split along at each depth, and subsequently splitting at the median of datapoints in each node along the split variable, such that about half of the data in the current node lie in each child node.
To illustrate, the first split in the root node is made on variable 1, then at depth two, both child nodes have about half the data and themselves split along variable 2, and subsequently the four child nodes, each with a fourth of the data, split along variable 3. 
When we reach a depth of $P+1$, the $2^{P+1}$ splits will again be along variable 1, and each node has about $\frac{N}{2^{P+1}}$ many points in it. 

This is of course typically not be how a regression tree is fit in the supervised setting, which will typically calculate splits taking $y$ values into account.
Establishing the convergence rate of the TBGE in such settings is left as future work. 
We believe that even the present proof, provided in the simplified setting, nevertheless provides valuable intuition as to the advantages and limitations of the TBGE.

Our first assumption concerns the splitting rate.
\begin{ass0}
\label{ass:sn}
Let $S(N)$ denote the depth as a function of sample size.
\begin{enumerate}
    \item $\underset{N\to\infty}{\lim} S(N) = \infty$. 
    \item $\underset{N\to\infty}{\lim} \frac{S(N)}{\log N} = 0$.
\end{enumerate}
\end{ass0}

This very slow rate of splitting is necessary to ensure that the number of datapoints in each node can grow to infinity while still eventually achieving arbitrarily small nodes.

Our next assumption is on the process which generates the feature variables.

\begin{ass1}
\label{ass:splits}
    The input points $\x_n$ are sampled from a distribution such that for any $\epsilon > 0$, there are constants $C_-,C_+$ giving:
    \begin{align*}
        P\left(C_- 2^{-\frac{D_i}{P}}\leq |u^i_p - l^i_p| \leq C_+ 2^{-\frac{D_i}{P}}\right)>1-\epsilon \quad \forall p\in\{1,\ldots,P\}\,,
    \end{align*}
    where $D_i := 1+\lfloor \log_2(i+1) \rfloor$ is the depth of node $i$.
\end{ass1}

This assumption tells us that the width of the leaf nodes will shrink exponentially. 
This assumption is most likely to be violated in observational contexts, where the feature variables may live on some manifold within $[0,1]^P$ and thus leave gaps within the unit cube unfilled. 
Naturally, the gradient estimates for regions without data will not behave well. 
However, this assumption may be satisfied in contexts where the feature variables may be set by the experimenter. 
In particular, this property holds for the uniform distribution, as we now discuss.

\begin{rem1}
Assumption 1 is satisfied if $\x_1, \ldots, \x_N \overset{i.i.d}{\sim} U([0,1]^P)$ and $S(N)$ satisfies Assumption 0.
\end{rem1}
\begin{proof}
    Start with the case that $P=1$.
    Here, we have simply uniform data on $[0,1]$, and are making $S(N)$ many recursive splits so as to keep an equal number of points in each bin.
    The distribution of the bin borders is given by the order statistic distributions. 
    For uniform data, it is known that the difference between the $j^{\textrm{th}}$ and $k^{\textrm{th}}$ order statistics is given by a Beta distribution with $\alpha=k-j$ and $\beta=N+1-(k-j)$ when $k>j$. 
    As each bin has $\frac{N}{2^{S(N)}}$ observations, $k-j=\frac{N}{2^{S(N)}}$ and the marginal distribution of a bin width is thus:
    \begin{align*}
        u^i - l^i \sim \textrm{Beta}\left(\frac{N}{2^{S(N)}}, N+1-\frac{N}{2^{S(N)}}\right) \,.
    \end{align*}
    We thus have that:
    \begin{enumerate}
        \item $\mathbb{E}[u^i - l^i] = 
        \left(\frac{N}{N+1}\right) 
        \frac1{2^{S(N)}}
        \in
        \Theta\left(\frac1{2^{S(N)}}\right)
        $ \,,
        \item $\mathbb{V}[u^i - l^i] = 
        \left(\frac{N^2}{(N+1)^2(N+2)}\right)
        \left(
        1+\frac1{N}-\frac1{2^{S(N)}}
        \right)
        \frac1{2^{S(N)}}
        \in
        \Theta\left(\frac1{N 2^{S(N)}}\right)
        $ \,,
    \end{enumerate}
    where as usual $\Theta$ denotes the class of functions bounded on both sides by constant multiples of its argument.
    
    We next develop a Chebyshev bound, valid for all $k>1$:
    \begin{align*}
        & P\left(
        \left|
        (u^i-l^i) - \frac{1}{2^{S(N)}}
        \right|
        \leq 
        \frac{k}{\sqrt{N2^{S(N)}}}
        \right)
        \geq 1-\frac1{k^2} 
        \\ & \iff
        P\left(
        \left|
        2^{S(N)}(u^i-l^i) - 1
        \right|
        \leq 
        k \sqrt{\frac{2^{S(N)}}{N}} 
        \right)
        \geq 1-\frac1{k^2} \,.
    \end{align*}

    We now fix some $\epsilon>0$, and set $k=\frac1{\sqrt{\epsilon}}$, such that:
    \begin{align*}
        P\left(
        \left|
        2^{S(N)}(u^i-l^i) - 1
        \right|
        \leq 
        \sqrt{\frac{2^{S(N)}}{N}} 
        \frac1{\sqrt{\epsilon}}
        \right)
        \geq 1-\epsilon
        \,.
    \end{align*}
    
    Under our assumption that $S(N) \in o(\log N)$, we have that $\frac1{\sqrt{N 2^{S(N)}}} \in o(\frac1{2^{S(N)}})$, such that $\sqrt{\frac{2^{S(N)}}{N}}$ may be made arbitrarily small for sufficiently large $N$.
    Fixing some $0<\delta<1$,
    choose $N$ such that 
    $
    \sqrt{\frac{2^{S(N)}}{N}} < \delta \sqrt{\epsilon}
    $
    .
    Then:
    \begin{align*}
        P\left(
        \left|
        2^{S(N)}(u^i-l^i) - 1
        \right|
        \leq 
        \delta
        \right)
        \geq 1-\epsilon
        \,,
    \end{align*}
    which gives us that $2^{-S(N)} (1-\delta) \leq u^i-l^i \leq 2^{-S(N)}(1+\delta)$ with high probability.

    When $P>1$, we alternate between splitting along each variable. 
    The marginal distribution of $x_p$ remains uniform within each split node, which allows us to reduce the general dimensional case to the one dimensional case.
    However, each variable is split only every $P^\textrm{th}$ pass, which rescales the $S(N)$ term in the exponent by $\frac1{P}$.
\end{proof}

Our final assumptions are on the function and error process.

\begin{ass2}
\label{ass:tbge}
Let $y_n = f(\x_n)+\epsilon_n$. We assume that:
\begin{enumerate}
    \item The estimate of the mean of $f(\x)$ over some interval $[\mathbf{a},\mathbf{b}]$ converges at the usual square root rate; this is achieved for example if the error terms $\epsilon_1,\ldots,\epsilon_N$ are such that $\mathbb{E}[\epsilon_n] = 0$; $\mathbb{V}[\epsilon_n] \leq \infty$ and $\epsilon_{n_1}\indep \epsilon_{n_2} \forall n_1,n_2\in\{1,\ldots,N\}$.
    %\item We alternate between which variable is split along at each depth. 
    \item $f$ is twice differentiable with $\Vert\nabla^2 f (\x)\Vert \leq H \,\, \forall \x\in[0,1]^P$.
\end{enumerate}
\end{ass2}

The first assumption gives us that the sample means in a given node converge to the integral of the function over that node normalized by the node's volume. 
The second assumption allows us to bound the error of a 1st order Taylor approximation. 

%Items 1 and 2 are necessary for the mean estimates in leaf nodes to converge to the mean value of $f$ on that node, while 3 and 4 are necessary to make a series of $f$ within a node accurate.

%Assumptions 1 and 2 ensure that we accumulate enough data in each leaf node, while obtaining small leaf nodes of vanishing size, to achieve gradient consistency.

We are now prepared to prove our results.

\begin{thm41}\label{thm:grad_conv}
Let $S(N)$ denote the number of splits in the tree as a function of the sample size.
Let 
$
\frac{\tilde{\partial}^N f(\x)}{\tilde{\partial} x_{p}} 
$
denote the TBGE at point $\x\in[0,1]^P$ with sample size $N$.
Under Assumptions 0, 1, and 2:
\begin{equation*}
    \frac{\tilde{\partial}^N f(\x)}{\tilde{\partial} x_{p}} 
    = 
    \frac{\partial f(\x)}{\partial x_{p}}
    +
    O_P\left(
    \frac{2^{\frac{P+2}{2P}S(N)}}{\sqrt{N}}
    +
    P2^{-\frac{S(N)}{P}}
    \right)
    %O_P\left(
    %\sqrt{\frac{S(N)}{N}} + 2^{-\frac{S(N)}{P}}
    %\right) 
    \,.
\end{equation*}
\end{thm41}
\begin{proof}

At each node $i$, the quantity $\gamma_i$ serves as our estimate of the partial derivative in the $\sigma_i$ direction near that node.
For a fixed $i$, we do not expect convergence of $\gamma_i$ to any gradient quantity.  
However, we do expect convergence as we go down the tree depth for suitably chosen $S(N)$.

We wish to estimate the $p$ partial derivative at some location $\x\in[0,1]^P$ given a sample of size $N$.
Given the depth $S(N)>P$, the most recent split along variable $p$ is given by the deepest node $i$ of depth $K$ no greater than $S(N)$ which splits is along variable $p$ (that is, $i \equiv p \pmod{P}$) and which contains $\x$ (that is, $B^K(\x) = i$).
Our task is to show that this sequence of $\gamma_i$, with $i$ thusly viewed implicitly as a function of $N$, will converge to $\frac{\partial f(\x)}{\partial x_p}$.

Recall the definition of the finite-difference like estimate:
\begin{align}
    \gamma_i
    =
    \frac{2(\hat\mu_r^i - \hat\mu^i_l)}{u^i_{\sigma_i} - l^i_{\sigma_i}}
    \,,
\end{align}
and $\sigma_i = p$ by our choice of $i$.

$\hat\mu_l^i$ is the mean of the data falling within the left child of node $i$. 
By our assumption, that within any node, the sample mean converges to the population mean at the usual rate, we have that:
\begin{align}
    \hat\mu_l^i = 
    \mu_{l}^i + O_P\left(\sqrt{\frac{2^{S(N)}}{N}}\right) 
    :=
    %\frac1{\textrm{vol}([\lb^i,\ub^i])}
    %\int_{[\lb^i,\ub^i]}
    \underset{\x\sim U\left([\lb^{c_l^i}, \ub^{c_l^i}]\right)}{\mathbb{E}}
    \left[
    f(\x)
    \right]
    %O_p\left(\frac1{\sqrt{N}_i}\right)
    %O_P\left(\frac{1}{\sqrt{\frac{N}{2^{S(N)}}}}\right) 
    %+ O_P\left(\left(\sqrt{\frac{N}{2^{S(N)}}}\right)^{-1}\right) 
    + O_P\left(\sqrt{\frac{2^{S(N)}}{N}}\right) 
    \,,
\end{align}
where the second equation serves to give our definition of $\mu_i^l$, a quantity independent of our dataset and instead dependent on $f$, the mean function of the relationship being approximated.
Here, the expectation is taken with respect to the uniform distribution over the bounds of the left child node of $i$, which as before we denote by $c^i_l$.

Next we approximate $f$ at an arbitrary point $z\in[\lb^i,\ub^i]$ by expanding it about $\x$ (or any other point in $[\lb^i,\ub^i]$), using the Lagrange form of the remainder with $\xi$ dependent on $\z$:
\begin{align}
    f(\z) = 
    f(\x) + \nabla f(\x)^\top 
    \left(
    \z - \x
    \right)
    %+ o(\Vert \x - \ub^i\Vert)\,,
    +
    \frac12
    (\z-\x)^\top
    \nabla^2 f(\xi)
    (\z-\x)
    \,,
\end{align}
and plug this into the definition of $\mu_l^i$:
\begin{align*}
    & \mu_l^i
    =
    \underset{\z\sim U\left([\lb^{c_l^i}, \ub^{c_l^i}]\right)}{\mathbb{E}}
    [
    f(\x) + \nabla f(\x)^\top 
    \left(
    \z-\x
    \right)
    %+ o(\Vert \x - \ub^i\Vert)
    +
    \frac12
    (\z-\x)^\top
    \nabla^2 f(\xi)
    (\z-\x)
    ]
    \\ & = 
    f(\ub^i) + 
    \frac12
    \nabla f(\x)^\top 
    \left(
    \lb^{c_l^i}+\ub^{c_l^i}
    \right)
    - 
    \nabla f(\x)^\top 
    \x
    %+ o(\Vert \x - \ub^i\Vert) \,.
    +
    \frac12
    \underset{\z\sim U\left([\lb^{c_l^i}, \ub^{c_l^i}]\right)}{\mathbb{E}}
    [
    (\z-\x)^\top
    \nabla^2 f(\xi)
    (\z-\x) 
    ]\,.
\end{align*}
We needed only the linearity of expectation to evaluate the first two terms. 
%The effect of taking the expectation of an $o(\Vert \x - \ub^i\Vert)$ function over a region of vanishing diameter is to produce another $o(\Vert \x - \ub^i\Vert)$ function, leaving that expression unchanged.
Since we likewise have:
\begin{align*}
    & \mu_r^i
    =
    f(\x) + 
    \frac12
    \nabla f(\x)^\top 
    \left(
    \lb^{c_r^i}+\ub^{c_r^i}
    \right)
    - 
    \nabla f(\x)^\top 
    \ub^i
    %+ o(\Vert \x - \ub^i\Vert) \,,
    +
    \frac12
    \underset{\z\sim U\left([\lb^{c_r^i}, \ub^{c_r^i}]\right)}{\mathbb{E}}
    [
    (\z-\x)^\top
    \nabla^2 f(\xi)
    (\z-\x) 
    ]\,.
\end{align*}
Taking the difference yields:
\begin{align*}
    & \mu_r^i - \mu_l^i = 
    \nabla f(\x)^\top 
    \left(
    \lb^{c_r^i}
    -\lb^{c_l^i}
    +
    \ub^{c_r^i}
    -\ub^{c_l^i}
    \right)
    %+ o(\Vert \x - \ub\Vert) \,.
    + R \,,
\end{align*}
where 
\begin{align*}
    R = 
    \frac12
    \underset{\z\sim U\left([\lb^{c_r^i}, \ub^{c_r^i}]\right)}{\mathbb{E}}
    [
    (\z-\x)^\top
    \nabla^2 f(\xi(\x))
    (\z-\x) 
    ]\,
    -
    \frac12
    \underset{\z\sim U\left([\lb^{c_l^i}, \ub^{c_l^i}]\right)}{\mathbb{E}}
    [
    (\z-\x)^\top
    \nabla^2 f(\xi(\x))
    (\z-\x) 
    ]\,.
\end{align*}
As siblings, the bounds of $c^i_r$ and $c^i_l$ agree except for along the $p$ dimension, such that:
\begin{align*}
    & \nabla f(\x)^\top\left(
    \lb^{c_r^i}
    -\lb^{c_l^i}
    +
    \ub^{c_r^i}
    -\ub^{c_l^i}
    \right)
    \\ &=
    \sum_{k\neq p}
    \frac{\partial f(\x)}{\partial x_k}
    (
    l^i_k-l^i_k
    + u^i_k-u^i_k
    )
    +
    \frac{\partial f(\x)}{\partial x_p} 
    (\tau^i - l_p^i + u_p^i - \tau^i)
    = \frac{\partial f(\x)}{\partial x_p} (u_p^i - l_p^i) \,.
\end{align*}
Therefore:
\begin{align*}
    \mu_r^i - \mu_l^i 
    = 
    \frac12
    \frac{\partial f(\x)}{\partial x_p} (u_p^i - l_p^i)
    %+ o(\Vert \x - \ub\Vert) \,,
    + R \,,
\end{align*}
such that
\begin{align*}
        \frac{2(\mu_r^i - \mu_l^i)}{u_p^i-l_p^i}
        =
     \frac{\partial f(\x)}{\partial x_p}   
     +
     \frac{R}{u_p^i-l_p^i} \,.
\end{align*}

Let's now work to develop $R$.
Note that:
\begin{align}
    & \left|
    \underset{\z\sim U\left([\lb^{c_r^i}, \ub^{c_r^i}]\right)}{\mathbb{E}}
    [
    (\z-\x)^\top
    \nabla^2 f(\xi(\z))
    (\z-\x) 
    ]\,
    \right|
    \leq
    \underset{\z\in[\lb^{c_r^i}, \ub^{c_r^i}]}{\max}
    \left|
    (\z-\x)^\top
    \nabla^2 f(\xi(\z))
    (\z-\x) 
    \right|
    \\ & \overset{C.S.}{\leq}
    \underset{\z\in[\lb^{c_r^i}, \ub^{c_r^i}]}{\max}
    \left\Vert
    \z-\x
    \right\Vert
    \left\Vert
    \nabla^2 f(\xi(\z))
    (\z-\x) 
    \right\Vert
    \overset{\textrm{def } \Vert\cdot\Vert}{\leq}
    \underset{\z\in[\lb^{c_r^i}, \ub^{c_r^i}]}{\max}
    \left\Vert
    \z-\x
    \right\Vert^2
    \left\Vert
    \nabla^2 f(\xi(\z))
    \right\Vert \,.
\end{align}
The first inequality comes from the absolute integral being less than the max of the absolute integrand, the second comes from Cauchy-Schwartz, and the third from the definition of the operator norm.
Bounding the max of the product of positive quantities by the product of the maxes and invoking our bound on 
$
\left\Vert
\nabla^2 f(\xi(\z))
\right\Vert
$
will be our next steps, yielding:
\begin{align*}
    \underset{\z\in[\lb^{c^r_i}, \ub^{c^r_i}]}{\max}
    \left\Vert
    \z-\x
    \right\Vert^2
    \left\Vert
    \nabla^2 f(\xi(\z))
    \right\Vert 
    \leq 
    H
    \left\Vert
    \ub^i - \lb^i
    \right\Vert^2 \,.
\end{align*}
Making similar moves for the left child node gives us that:
\begin{align*}
    \frac{|R|}{u_p^i-l_p^i} \leq H \frac{\Vert \ub^i-\lb^i\Vert^2}{u_p^i-l_p^i}
    \,.
\end{align*}
%Our assumptions on the splitting process give us that
%$\ub^i - \lb^i = C 2^{-S(N)}$
Under our assumptions on the splitting process, we have that $C_- 2^{-\frac{S(N)}{P}} \leq |u_k^i - l_k^i| \leq C_+ 2^{-\frac{S(N)}{P}}$ for some $C_-,C_+>0$ with high probability.
Thus, also with high probability:
\begin{align*}
    \frac{\Vert \ub^i - \lb^i\Vert^2 }{u_p^i - l_p^i} 
    \leq
    \frac
    {PC_+^2 2^{-2 \frac{S(N)}{P}}}
    {C_- 2^{-\frac{S(N)}{P}}}
    =
    \left(P\frac{C_+^2}{C_-}\right)
    2^{-\frac{S(N)}{P}} \,,
\end{align*}
and therefore
\begin{align*}
        \frac{2(\mu_r^i - \mu_l^i)}{u_p^i-l_p^i}
        =
     \frac{\partial f(\x)}{\partial x_p}   
     +
     O_P\left(
     P
     2^{-\frac{S(N)}{P}}
     \right) \,.
\end{align*}

We thus have a bound on the difference between the true functional mean and the gradient, but have thus far been ignoring error in the estimation of these means.
To bring this into the equation, we begin by noting that:
\begin{align*}
    \hat \mu_r^i = \mu_r^i 
    + O_P\left(\sqrt{\frac{2^{S(N)}}{N}}\right) 
    \implies 
    \frac{\hat \mu_r^i}{u_p^i - l_p^i} = \frac{\mu_r^i}{u^i_p-l^i_p}  
    + O_P\left(\sqrt{\frac{2^{\frac{P+2}{P}S(N)}}{N}}\right) 
\end{align*}
and similarly for $\hat\mu_l^i$, which gives us that:
\begin{align*}
    \frac{2(\hat\mu_r^i - \hat\mu_l^i)}{u_p^i - l_p^i}
    =
    \frac{2(\mu_r^i - \mu_l^i)}{u_p^i - l_p^i}
    %+ O_P\left(\sqrt{\frac{2^{3S(N)}}{N}}\right) \,.
    + O_P\left(\frac{2^{\frac{P+2}{2P}S(N)}}{\sqrt{N}}\right)  \,,
\end{align*}
Combining with our earlier work yields:
\begin{align*}
    \frac{2(\hat\mu_r^i - \hat\mu_l^i)}{u_p^i - l_p^i}
    =
     \frac{\partial f(\x)}{\partial x_p}   
     +
     O_P\left(
    \frac{2^{\frac{P+2}{2P}S(N)}}{\sqrt{N}}
    +
    P2^{-\frac{S(N)}{P}}
    \right) \,.
\end{align*}

\end{proof}

%\todo{old below}
%If $\x \in [\lb, \ub]$, expand $f$ around $\ub$ (or any other point in $[\lb,\ub]$): 
%\begin{align}
%&f(\x) = f(\ub) + \nabla f(\ub) ^\top (\x-\ub) + o(\Vert \x-\ub\Vert) 
%\\&=
%f(\ub) + \nabla f(\ub) ^\top (\x-\ub) + o(\diam([\lb,\ub]))\,,
%\end{align}
%where $\textrm{diam}(\mathcal{A})$ is the diameter of the set $\mathcal{A}$ and is given by $\max_{\x_1,\x_2\in\mathcal{A}} \Vert\x_1-\x_2\Vert$.
%%Let $N_i$ denote the number of points lying within node $i$.
%
%We denote the minimal number of points in any node by $\eta(N)$.
%Then:
%\begin{equation}
%\label{eq:partial_expand}
%    \frac{\tilde\partial f(\x)}{\tilde\partial x_{p}} = \frac{\partial f(\x)}{\partial x_{\sigma_i}}
%    + O_P\Bigg(\frac1{\sqrt{\eta(N)}}\Bigg)
%    + o(\diam([\lb,\ub]).
%\end{equation}
%Denote by $S(N)$ the depth of the tree as a function of the sample size.
%If we are making cuts at midpoints in each node, then $\diam=\max_{1\leq p \leq P} \frac{1}{2^{S_p}}$, where $S_p$ is the number of splits along variable $p$.
%It's clear that reducing the diameter is most speedily achieved by alternating which variable is split along such that $\diam= 2^{-\lfloor \frac{S}{P} \rfloor}\leq C 2^{-\frac{S}{P}}$ for some $C$.
%We have that $\eta(N)\leq \frac{N}{S(N)}$ with equality if the points are evenly distributed.
%So $O_P(\sqrt{\frac{1}{\eta(N)}}) = O_P(\sqrt{\frac{S(N)}{N}})$
%and $o(\diam([\lb,\ub]) = o(2^{-\frac{S}{P}})$, which taken together with \ref{eq:partial_expand} yield the desired result.
%%Taken together, this yields Theorem \ref{thm:grad_conv}.
%%\end{proof}

\begin{thm42}
%Let $h:\mathbb{R}^P\to\mathbb{R}^P$ be bounded in the sense that $h(\z) \leq C_h \Vert\z\Vert$ for all $\z$.
Let $h:\mathbb{R}^P\to\mathbb{R}^P$ be bounded in the sense that $h(\z) \leq C_h \Vert\z\Vert^{a_h}$ for $a_h>0$.
Under Assumptions 0, 1 and 2, the Partition-Based estimator converges to the true integro-differential quantity as the observation sample size diverges. That is,
\begin{align*}
    \lim_{N\to\infty} \hat{\mathcal{I}}_{PB}(f) = 
    \lim_{N\to\infty} \sum_{i\in\N_{K-1}} 
    h(\G^i)\mu([\lb^i,\ub^i])
    \,.
    =
    \int h(\nabla f(\x)) d\mu(\x) 
    =
    \mathcal{I}(f) \,.
\end{align*}
\end{thm42}

\begin{proof}
    Since the function sequence $\gv^k(\x) := G_{B^k(\x)}$ converges pointwise to $\nabla f(\x)$ by Proposition 1, we need only establish a function $H(\x)$ which dominates $\gv^k(\x)$ and apply the Dominated Convergence Theorem.
    
    To this end, denote by $C^r,C^l$ the child nodes of node $i$ and examine its gradient estimator's $p$th entry, given by:
    \begin{equation}
    	\frac{2(\mu_i^r-\mu_i^l)}{u_p^i-l_p^i}
    	\overset{N\to\infty}{\to}
    	    	\frac{2\big(
    	    		\frac{1}{|\N_{C^r}|}\int_{\N_{C^r}} f(\x)d\x
    	    		-
    	    		\frac{1}{|\N_{C^r_i}|}\int_{\N_{C^l_i}} f(\x)d\x
    	    		\big)}{u_p^i-l_p^i} \, .
    \end{equation}

	The magnitude of this difference in averages is bounded by the magnitude of the difference of extremes:
	
	\begin{equation}
    	 |\frac{1}{|\N_{C^r_i}|}\int_{\N_{C^r_i}} f(\x)d\x
		-
		\frac{1}{|\N_{C^r_i}|}\int_{\N_{C^l_i}} f(\x)d\x|
		\leq
		\underset{(\x_1,\x_2)\in \N_{C^r_i}\times\N_{C^l_i}}{\max} |f(\x_1)-f(\x_2)| \,.
	\end{equation}

	But since $f$ is continuously differentiable, it is also Lipschitz continuous (call the constant $L$), and we have that:
	
	\begin{equation}
		\underset{(\x_1,\x_2)\in \N_{C^r_i}\times\N_{C^l_i}}{\max} |f(\x_1)-f(\x_2)|
		\leq
		 L \Vert \x_1 - \x_2 \Vert_2 \leq P L \Vert \x_1 - \x_2 \Vert_{\infty} \, .
	\end{equation}

	Therefore:
	\begin{equation}
		    	\left|\frac{2(v_{C^r_i}-v_{C^l_i})}{u_p^i-l_p^i}\right| 
		    	\leq
		    	\left|\frac{2(P L \Vert \x_1 - \x_2 \Vert_{\infty} )}{u_p^i-l_p^i}\right| 
		    	\leq 
		    	2PL \, .
	\end{equation}

	Hence $\Vert g^k(\x)\Vert_2 \leq 2P^2L$, 
    , and thence $C_h(2P^2L)^{a_h}$ bounds $h(g^k(\x))$. 
	Since the integral is over the unit hypercube, the constant function is integrable and we can apply the Dominated Convergence Theorem to yield the desired result.
    
\end{proof}

\begin{theorem}
\label{thm:mce}
Under Assumptions 1 and 2, the Monte Carlo estimator converges to the true integro-differential quantity as the Monte Carlo sample size and the observation sample size diverge. That is,
\begin{align*}
    \lim_{N,M\to\infty} \hat{\mathcal{I}}_{MC}(f) 
    =
    \lim_{N,M\to\infty}
    \frac{1}{\mcs}
    \sum_{\x_m\sim\mu}
    h(\tilde\nabla f(\x)) 
    = 
    \int h(\nabla f(\x)) d\mu(\x) 
    =
    \mathcal{I}(f) \,.
\end{align*}
\end{theorem}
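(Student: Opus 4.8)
The plan is to bound the total error by a triangle inequality that isolates the pure Monte Carlo sampling error from the error incurred by substituting the TBGE $\tilde{\nabla} f$ for the true gradient $\nabla f$. Writing $\mathcal{I}(f) = \int h(\nabla f(\x))\,d\mu(\x)$ and drawing $\x_1,\ldots,\x_{\mcs}$ i.i.d.\ from $\mu$, I would decompose
\begin{align*}
\left| \hat{\mathcal{I}}_{MC}(f) - \mathcal{I}(f) \right|
&\leq
\underbrace{\frac{1}{\mcs}\sum_{m=1}^{\mcs} \left\Vert h(\tilde{\nabla} f(\x_m)) - h(\nabla f(\x_m)) \right\Vert}_{(A)} \\
&\quad +
\underbrace{\left\Vert \frac{1}{\mcs}\sum_{m=1}^{\mcs} h(\nabla f(\x_m)) - \mathcal{I}(f) \right\Vert}_{(B)} \,.
\end{align*}
Term $(B)$ is a pure Monte Carlo term involving only the true gradient, while term $(A)$ captures the plug-in error from the tree.

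First I would dispose of $(B)$. Exactly as in the proof of Theorem 4.2, Assumption 2 makes $f$ continuously differentiable with Hessian bounded by $H$, hence Lipschitz on the compact cube with some constant $L$ and with $\Vert\nabla f(\x)\Vert \leq L$ everywhere; the growth bound $\Vert h(\z)\Vert \leq C_h \Vert\z\Vert^{a_h}$ then gives $\Vert h(\nabla f(\x))\Vert \leq C_h L^{a_h}$, so $h(\nabla f)$ is bounded and therefore $\mu$-integrable. Since the $\x_m$ are i.i.d.\ from the probability measure $\mu$, the strong law of large numbers yields $(B)\to 0$ almost surely as $\mcs\to\infty$, with no dependence on $N$.

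The hard part is $(A)$, because the number of summands grows together with the sample size in the joint limit, so a fixed-$\mcs$ argument does not immediately suffice; I need control of $(A)$ that is \emph{uniform} in $\mcs$. The key is the uniform boundedness of the TBGE established inside the proof of Theorem 4.2: each coordinate of $\G^{B^K(\x)}$ is a finite difference of averages of the Lipschitz function $f$ and is therefore bounded by $2PL$ independently of $N$ and $\x$, giving $\Vert\tilde{\nabla} f(\x)\Vert \leq 2P^2 L$ and hence $\Vert h(\tilde{\nabla} f(\x))\Vert \leq C_h (2P^2 L)^{a_h}$. Consequently every summand of $(A)$ is bounded by a constant $B$ independent of $\mcs$ and $N$. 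Taking expectations and using that the $\x_m$ are identically distributed, $\mathbb{E}[(A)] \leq \mathbb{E}_{\x\sim\mu}\,\mathbb{E}\Vert h(\tilde{\nabla} f(\x)) - h(\nabla f(\x))\Vert =: e_N$, a quantity that does not depend on $\mcs$. For each fixed $\x$, the pointwise consistency of the TBGE (Theorem 4.1) gives $\tilde{\nabla} f(\x)\to\nabla f(\x)$ in probability, so by continuity of $h$ and the continuous mapping theorem $h(\tilde{\nabla} f(\x))\to h(\nabla f(\x))$ in probability; since the integrand is uniformly bounded by $2B$, bounded convergence upgrades this to convergence of the inner expectation for every $\x$, and a second application of dominated convergence over $\x\sim\mu$ shows $e_N\to 0$ as $N\to\infty$. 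Markov's inequality then gives $(A)\to 0$ in probability at a rate governed by $e_N$ alone.

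Finally I would assemble the joint limit: given $\varepsilon>0$, term $(B)$ is made small by taking $\mcs$ large (uniformly in $N$) and term $(A)$ is made small by taking $N$ large (uniformly in $\mcs$), whence $\hat{\mathcal{I}}_{MC}(f)\to\mathcal{I}(f)$ as $N,\mcs\to\infty$. I expect the main obstacle to be precisely this coupling in the joint limit: because $(A)$ averages gradient errors that are only controlled pointwise in probability, a naive union bound over the $\mcs$ sampled points would degrade as $\mcs$ grows. The resolution is to exploit the $N$- and $\mcs$-free uniform bound on the TBGE, which lets the mean error $e_N$ decouple from $\mcs$ entirely and collapses the analysis to a single bounded-convergence argument.
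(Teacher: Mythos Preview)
Your proof is correct at the paper's level of rigor, but takes a genuinely different route. The paper's argument is a two-line reduction to Theorem~4.2: it takes the \emph{iterated} limit, first sending $M\to\infty$ for fixed $N$, so that by the law of large numbers the Monte Carlo average becomes $\int h(\tilde{\nabla} f(\x))\,d\mu(\x)$; since $\tilde{\nabla} f$ is piecewise constant on the tree partition, this integral is \emph{exactly} the Partition-Based estimator $\sum_{i\in\N_{K-1}} h(\G^i)\mu([\lb^i,\ub^i])$, and Theorem~4.2 then handles the $N\to\infty$ limit. Your triangle-inequality decomposition into a plug-in term $(A)$ and a pure Monte Carlo term $(B)$ is instead self-contained, does not appeal to Theorem~4.2, and---crucially---is more careful about the \emph{joint} limit $N,M\to\infty$ stated in the theorem: you explicitly show that $(A)$ is controlled by $e_N$ uniformly in $M$ and $(B)$ by $M$ uniformly in $N$, whereas the paper only treats the iterated order $M\to\infty$ then $N\to\infty$. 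The paper's version is slicker and reuses existing machinery; yours is more robust to how one interprets the double limit. One small caveat: both arguments implicitly use continuity of $h$ (you via the continuous mapping theorem, the paper via dominated convergence on $h(\gv^k)$), which is not stated in the hypothesis but holds for the intended applications.
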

\begin{proof}
    This follows from the fact that 
    \begin{align}
        & \lim_{N\to\infty}
        \lim_{M\to\infty}
        %\underset{\x\sim\mu}{\mathbb{E}}[
        \frac{1}{M}
        \underset{\x_m\sim\mu}{\sum}
        h(\tilde{\nabla} f(\x_m))
        =        
        \\ & =
        \lim_{N\to\infty}
        \int_{\x\in[0,1]^P}
        h(\tilde{\nabla} f(\x))
        d\mu
        %]
        =
        \lim_{N\to\infty}
        \sum_{i\in\D_k} h(\G_i) \mu([\lb^i,\ub^i])
    \end{align}
    and an application of Theorem 4.2.
\end{proof}

%\begin{cor3}
%    
%    Let $G$ be produced according to Algorithm 1 applied to a greedily estimated regression tree fit to data with finite variance fit to a function $f$ continuously differentiable on $[0,1]^P$ whose gradient is integrable with respect to the Lebesgue measure. Then, 
%    \begin{equation}
%    	\lim_{k,N,M\to\infty}  
%    	(\x-\x^*)
%    	\odot 
%    	\frac{1}{M}
%    	\sum_{m=1}^M 
%    	G_{B^k\big(\alpha_m \x + (1-\alpha_m) \x^*\big)}
%    	=
%    	(\x-\x^*) \odot \int_{\alpha=0}^1 \nabla f(\alpha\x + (1-\alpha)\x^*) d\x
%    \end{equation}
%    
%\end{cor3}
%\begin{proof}
%	Similarly as the proof of Corollary 2, we begin with
%	\begin{equation}
%		\lim_{M\to\infty}
%    	\sum_{m=1}^M 
%		G_{B^k\big(\alpha_m \x + (1-\alpha_m) \x^*\big)}
%		=        
%		\int_{\alpha=0}^1
%		G_{B^k(\alpha \x + (1-\alpha) \x^*)} d\alpha
%	\end{equation}
%	Subsequently, we can use the Dominated Convergence Theorem again as in the proof of Proposition 1 using the same dominating function developed therein.
%\end{proof}

\subsection{Implications for Active Subspaces and Integrated Gradients}
\label{sec:implications}

We conclude this section by explicitly stating the implications of these results for Tree-based Active Subspace (TBAS) estimation.
\begin{corollary}
    %Let $G$ be produced according to Algorithm 1 applied to a greedily estimated regression tree fit to data with finite variance fit to a function $f$ continuously differentiable on $[0,1]^P$ whose gradient is integrable with respect to $\mu$. Then, 
    Let $K(N)$ denote the depth of the regression tree as a function of $N$.
    Under Assumptions 1 and 2, we have that
    \begin{align*}
        & \lim_{N\to\infty} 
        \sum_{i\in\D_{K(N)-1}} \G_{i}\G_{i}^\top \mu(\N_i) 
        =
        \int_{[0,1]^P} \nabla f (\x) \nabla f(\x) d\mu(\x) \, .
    \end{align*}
\end{corollary}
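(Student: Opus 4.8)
The plan is to recognize the left-hand side as the Partition-Based Estimator $\hat{\mathcal{I}}_{PB}(f)$ of Theorem 4.2 specialized to $h(\av) = \av\av^\top$, and to read the corollary off directly from that theorem. With this choice of $h$, the general PBE $\sum_{i\in\N_{K-1}} h(\G^i)\mu([\lb^i,\ub^i])$ becomes $\sum_{i\in\N_{K-1}} \G^i\G^{i\top}\mu([\lb^i,\ub^i])$, which (recalling that $\mu(\N_i)$ is shorthand for the measure $\mu([\lb^i,\ub^i])$ of the penultimate-depth node region) is exactly the TBAS estimator appearing on the left, while the target $\mathcal{I}(f) = \int h(\nabla f(\x))\,d\mu(\x)$ becomes $\int_{[0,1]^P} \nabla f(\x)\nabla f(\x)^\top\,d\mu(\x) = \C^f_\mu$. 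Thus the statement is an immediate instantiation of Theorem 4.2, and all that remains is to verify its hypotheses for this particular $h$.

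The single hypothesis to check is the growth bound $\Vert h(\z)\Vert \leq C_h\Vert\z\Vert^{a_h}$. Since $h(\z) = \z\z^\top$ is matrix-valued whereas Theorem 4.2 is stated for an $\mathbb{R}^P$-valued $h$, I would handle the codomain either entry-wise or over a matrix norm. The entry-wise route is cleanest: the $(j,k)$ entry $h_{jk}(\z) = z_j z_k$ is a scalar function obeying $|z_j z_k| \leq \Vert\z\Vert_2^2$, so the boundedness hypothesis holds with $C_h = 1$ and $a_h = 2 > 0$. Applying Theorem 4.2 to each of the $P^2$ scalar maps $h_{jk}$ shows that every entry of the estimator converges to $\int \frac{\partial f}{\partial x_j}(\x)\frac{\partial f}{\partial x_k}(\x)\,d\mu(\x)$, and since $P^2$ is finite, entry-wise convergence is convergence of the full matrix to $\C^f_\mu$.

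I expect no substantive obstacle, since the difficult work, namely pointwise consistency of the TBGE via Theorem 4.1 together with the domination argument that enables the Dominated Convergence Theorem, is already discharged inside Theorem 4.2. The only mild care needed is the codomain mismatch noted above. If one prefers to avoid the entry-wise reduction, I would instead invoke the uniform bound $\Vert\gv^k(\x)\Vert_2 \leq 2P^2 L$ established in the proof of Theorem 4.2: for a rank-one matrix one has $\Vert\G^i\G^{i\top}\Vert_F = \Vert\G^i\Vert_2^2 \leq (2P^2L)^2$ uniformly in $i$, so the constant function dominates the matrix-valued integrand and the Dominated Convergence Theorem applies verbatim over $(\mathbb{R}^{P\times P}, \Vert\cdot\Vert_F)$, yielding the same conclusion.
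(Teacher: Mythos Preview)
Your proposal is correct and matches the paper's own proof, which simply states that the corollary follows from Theorem~4.2. You actually go further than the paper by explicitly verifying the growth bound and carefully handling the codomain mismatch (the paper states Theorem~4.2 for $h:\mathbb{R}^P\to\mathbb{R}^P$ but applies it here with $h(\av)=\av\av^\top\in\mathbb{R}^{P\times P}$ without comment), so your entry-wise reduction is a welcome clarification.
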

\begin{proof}
    Follows from Theorem \ref{thm:pbe}.
\end{proof}
As well as for Tree-Based Integrated Gradient (TBIG) estimation.
\begin{corollary}
    Let $K(N)$ denote the depth of the regression tree as a function of $N$.
    Under Assumptions 1 and 2, we have that, assuming that $u_n$ are iid uniform on [0,1]:
    \begin{align*}
        \lim_{k,N,M\to\infty}  
        (\x-\x^*)
        \odot 
        \frac{1}{M}
        \sum_{m=1}^M 
        %\G^{B^k\big(u_m \x + (1-u_m) \x^*\big)}
        \tilde\nabla f\left(u_m \x + (1-u_m) \x^*\right)
        \\= 
        (\x-\x^*) \odot \int_{\alpha=0}^1 \nabla f(\alpha\x + (1-\alpha)\x^*) d\x \,.
    \end{align*}
\end{corollary}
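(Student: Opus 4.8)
The plan is to recognize the left-hand side as a special case of the Monte Carlo estimator $\hat{\mathcal{I}}_{MC}(f)$ and then to invoke Theorem~\ref{thm:mce}. As noted in Section~\ref{sec:est}, the Integrated Gradient is the integro-differential quantity of Equation~\ref{eq:If} corresponding to the choice $h(\av) = (\x-\x^*)\odot\av$ together with the degenerate measure $\mu$ obtained by pushing the uniform distribution of $u$ on $[0,1]$ through the map $\alpha\mapsto\alpha\x+(1-\alpha)\x^*$. Under this identification the points $u_m\x+(1-u_m)\x^*$ are precisely draws $\x_m\sim\mu$, so the Monte Carlo average in the statement is exactly $\hat{\mathcal{I}}_{MC}(f)$ for this $h$ and $\mu$.

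First I would check the growth hypothesis required by Theorem~\ref{thm:mce}, namely $h(\z)\leq C_h\Vert\z\Vert^{a_h}$. Because $h$ is linear here, $\Vert h(\z)\Vert = \Vert(\x-\x^*)\odot\z\Vert\leq\Vert\x-\x^*\Vert_\infty\Vert\z\Vert$, so the bound holds with $a_h=1$ and $C_h=\Vert\x-\x^*\Vert_\infty$. Theorem~\ref{thm:mce} then applies directly and gives $\lim_{N,M\to\infty}\hat{\mathcal{I}}_{MC}(f)=\int h(\nabla f(\z))\,d\mu(\z)$, where the joint limit in $k$ and $N$ is subsumed into the single limit in $N$ by letting the depth $K(N)$ grow with the sample size as prescribed by Assumption~0.

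It then remains to rewrite the limiting integral in Integrated Gradient form. Transporting $\int h(\nabla f(\z))\,d\mu(\z)$ back along the change of variables $\z=\alpha\x+(1-\alpha)\x^*$ and using linearity of $h$ to pull the constant factor $(\x-\x^*)$ outside the integral yields $(\x-\x^*)\odot\int_0^1\nabla f(\alpha\x+(1-\alpha)\x^*)\,d\alpha$, which is the right-hand side of the claim.

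The step I expect to demand the most care is verifying that Theorem~\ref{thm:mce} genuinely applies for this \emph{degenerate} $\mu$, which is supported on the one-dimensional segment from $\x^*$ to $\x$ and hence places zero mass on $[0,1]^P$. The consistency argument behind Theorem~\ref{thm:mce} rests on two facts that are untouched by this degeneracy: the pointwise convergence $\tilde\nabla f(\z)\to\nabla f(\z)$ of Theorem~\ref{thm:grad_conv}, which holds at \emph{every} $\z\in[0,1]^P$ and in particular along the segment; and the uniform envelope $\Vert\gv^k(\z)\Vert_2\leq 2P^2L$ established inside the proof of Theorem~\ref{thm:pbe}, which is independent both of $\z$ and of the tree depth. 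Since that envelope is constant and $\mu$ is a probability measure, the constant function dominates $h(\gv^k(\z))$ and is $\mu$-integrable, so the Dominated Convergence Theorem applies over the segment exactly as it does over the full cube. Consequently no analytic input beyond that already supplied by Theorems~\ref{thm:grad_conv}, \ref{thm:pbe}, and~\ref{thm:mce} is needed, and the corollary follows by the same reasoning used for the Active Subspace corollary above.
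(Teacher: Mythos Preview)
Your proposal is correct and follows the same approach as the paper, which simply invokes Theorem~\ref{thm:mce}; you have supplied considerably more detail than the paper does, including the explicit verification of the growth condition on $h$ and the observation that the degeneracy of $\mu$ does not interfere with the dominated convergence argument.
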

\begin{proof}
    Follows from Theorem \ref{thm:mce}.
\end{proof}

\section{Details of Numerical Experiments}

This section gives additional details and discussion of the numerical results presented in Section \ref{sec:num}. All tree-based models are estimated using Scikit-Learn \cite{scikit-learn}. 

\subsection{Rotation Prediction Study Additional Details}\label{app:rot}

\begin{figure}
	\centering
	\textbf{Tree Depth 4}
	\includegraphics[width=0.95\textwidth]{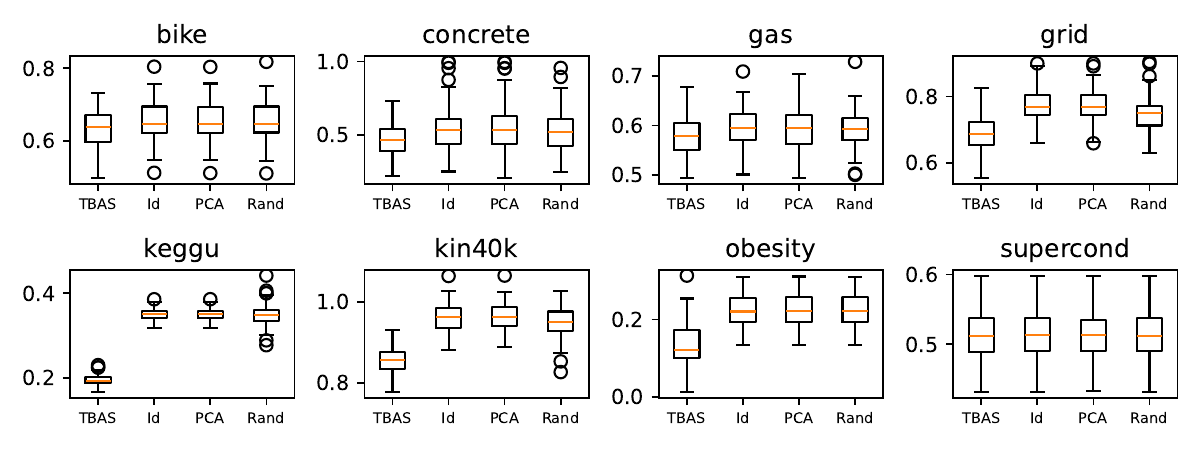}
	
	\textbf{Tree Depth 8}
	\includegraphics[width=0.95\textwidth]{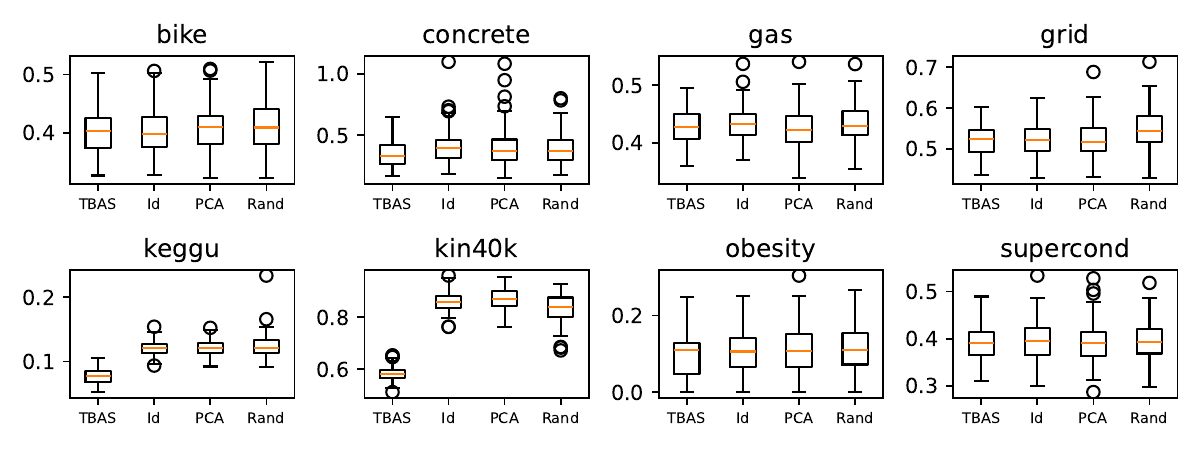}
	
	\textbf{Random Forest Depth 4}
	\includegraphics[width=0.95\textwidth]{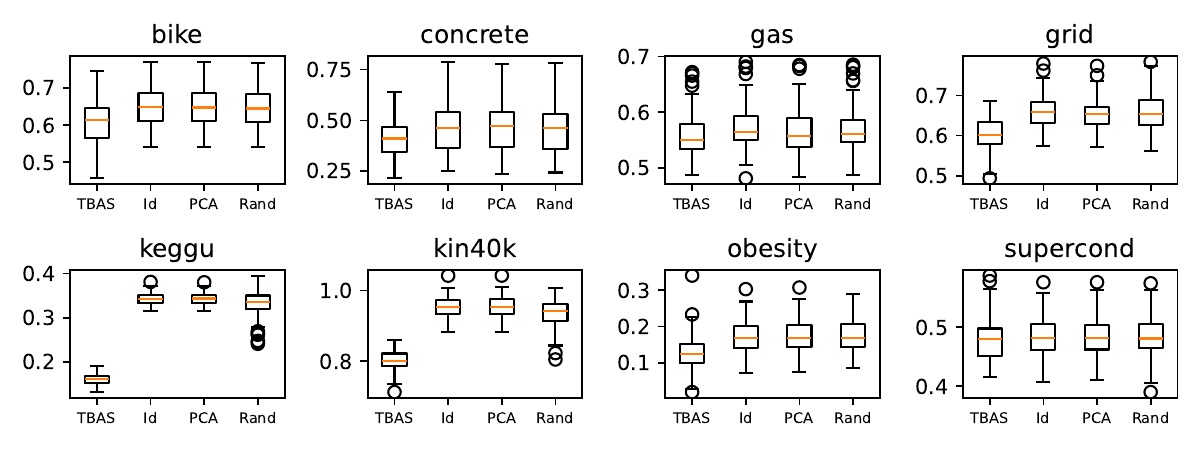}
	\caption{Boxplots corresponding to Table \ref{tab:rot}.}
	\label{fig:pred_bakeoff2}
\end{figure}

In this study, when performing a rotation, we used only the $\sqrt{P}$ many PCA components or Active Subspace dimensions, and appended these to the original design matrix as new variables. 
We measure prediction error using 100-fold cross validation.

The table below gives the parameters of the datasets used for the prediction study of Section \ref{sec:num_pred}.

{
    \begin{center}
	\small
	\begin{tabular}{|lrrr|}
		\hline
		Name & N & P & URL \\
		\hline
		concrete & 1,030 & 9 & \url{https://archive.ics.uci.edu/dataset/165/} \\
		kin40k & 40,000 & 9 & \url{https://github.com/alshedivat/keras-gp/kgp/datasets/kin40k.py} \\
		keggu & 65,554 & 28 & \url{https://www.genome.jp/kegg/pathway.html} \\
		bike & 17,379 & 13 & \url{https://archive.ics.uci.edu/dataset/560/} \\
		obesity & 2,111 & 24 & \url{https://archive.ics.uci.edu/dataset/544/} \\
		gas & 36,733 & 12 & \url{https://archive.ics.uci.edu/dataset/224} \\
		grid & 10,000 & 13 & \url{https://archive.ics.uci.edu/dataset/471/} \\
		supercond & 21,263 & 82 & \url{https://archive.ics.uci.edu/dataset/464/} \\
		\hline
	\end{tabular}	
    \end{center}
}

We also provide boxplots of Cross Validation errors in Figure \ref{fig:pred_bakeoff2}.
Running this study took about five hours on a 40 core Ubuntu machine with 128 GB of RAM. 

\subsection{Active Subspace Estimation Study Additional Details}\label{app:subspace}

In Section \ref{sec:num_smol_active}, We randomly sampled a unit vector $\mathbf{a}$ from the uniform distribution over directions and then sampled input points uniformly at random on the unit cube.
We subsequently evaluated the function $f(\x) = \cos(6\pi(\mathbf{a}^\top(\x-0.5))$ which was treated as the noiseless observed response $\y$.
We compared the estimates with using the angle each made with $\mathbf{a}$.
This experiment was repeated 20 times.

We used the implementation of PRA provided with the pypi package \texttt{PSDR}\footnote{https://psdr.readthedocs.io/en/latest/}.
For GP-based active subspace estimation, we used the CRAN package \texttt{activegp}.
We used all default settings for the PRA method as well as for the GP method.
For the DASM, we used a neural network with an additional layer of width 512 subsequent to the active subspace layer, and used gradient descent with a step size of $10^{-3}$ on the Mean Squared Error cost function. 
This neural network was implemented in JAX \cite{jax2018github}. 
In addition to the quantitative advantages enjoyed by the tree-based method of active subspace estimation, we would also like to note that like the GP-based method, and unlike the PRA and DASM, it provides an estimate of the entire active subspace matrix, rather than simply a basis for the active subspace.
This is important for two reasons.
First, with the active subspace matrix in hand, we can create analogs of PCA scree plots to determine what dimension of the active subspace is most desirable, or to get some idea of how much information is being lost in, say, a two dimensional visualization.
And secondly, it allows us to decide on an active subspace dimension \textit{after} having seen the data rather than before, without requiring the estimation procedure to be re-run.

Running this study took about eight hours on a 40 core Ubuntu machine with 128 GB of RAM.

\subsection{NHEFS Data Analysis Details}\label{app:mortality}

This table presents the first 3 eigenvectors of the mortality data analysis, restricting to the top 9 variables with highest coefficients.
The first eigenvector captures almost entirely the age variable, which \cite{lundberg2019explainable} also found to be most important. 
We see that the second eigenvector is evenly distributed across sex and one of the urine variables.
The third eigenvector is dominated by the urineDark variable.

\begin{center}
\begin{tabular}{|l|r|r|r|r|r|r|r|r|r|}
	\hline
	Eigenvector & age & urineDark & sex & urineNeg & SGOT & hemoglobin & urineAlb & total & physical \\
    \hline
	1 & -1.00 & -0.00 & 0.01 & 0.01 & -0.00 & -0.00 & 0.00 & -0.00 & -0.00  \\
	2 & 0.01 & -0.15 & 0.56 & 0.53 & -0.28 & -0.30 & 0.30 & -0.16 & 0.09  \\
	3 & -0.00 & -0.96 & -0.03 & -0.11 & 0.16 & 0.07 & 0.02 & 0.04 & 0.07  \\
	\hline
\end{tabular}
\end{center}

When producing the right panel of Figure \ref{fig:eig}, we used the prediction after accounting for the effect of age in order to demonstrate the change in the predictive surface over the second and third eigenvalues. 

\section{Additional Numerical Experiments}
\subsection{Empirical Gradient Estimate Quality}

We present the results of a simulation showing the empirical performance of TBGE in estimating gradients under various tree depth, sample sizes, dimensions and gradient densities on the function $f(x) = \log(1+\mathbf{a}^\top\mathbf{x})$ with nonzero elements of $\mathbf{a}$ generated from an iid Gaussian.
Figure \ref{fig:conv} contains heatmaps showing the results for all combinations of a $25\%$ and $100\%$ dense gradient and a depth 4 and 12 tree.
In each figure, the x-axis gives the sample size and the y-axis the problem dimension.
The color represents the angle between the true and estimated gradient.
We see that a deep tree is able to decrease error as sample size increases while a shallow one cannot, and also that error decreases much faster for the sparse gradient.

\begin{figure}
    \centering
    \includegraphics[width=0.8\linewidth]{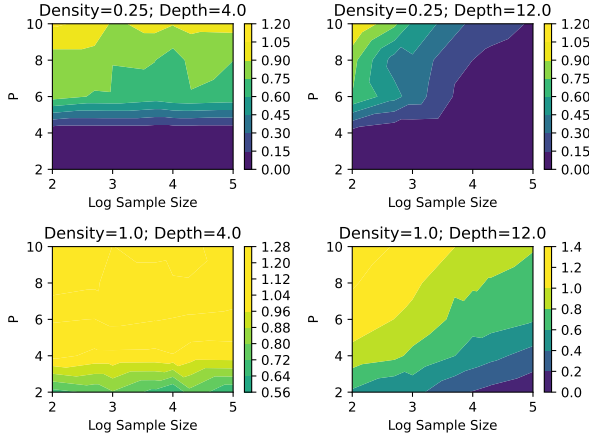}
    \caption{Convergence of Gradient Estimates in Sample Size.}
    \label{fig:conv}
\end{figure}

\subsection{Gradient Error Comparison against GP Surrogate.}

In this section we conduct a study similar to Section \ref{sec:num_smol_active}, but focus directly on the gradient quality rather than the active subspace quality.
That is, we compare the accuracy of gradient estimates produced by a regression tree against those provided by a Gaussian process for a fixed \textit{compute time}, rather than a fixed dataset size.
The idea is that on simpler simulations, we may be able to produce an abundance of data and that trees might be useful in this scenario.
In Figure \ref{fig:gp_compare}, we find that indeed the tree can produce useful estimates extremely quickly for larger sample sizes, leading to some comparative advantage compared to a GP fit on a smaller sample size (but taking even longer to make predictions).

\begin{figure}
    \centering
    \includegraphics[width=0.8\linewidth]{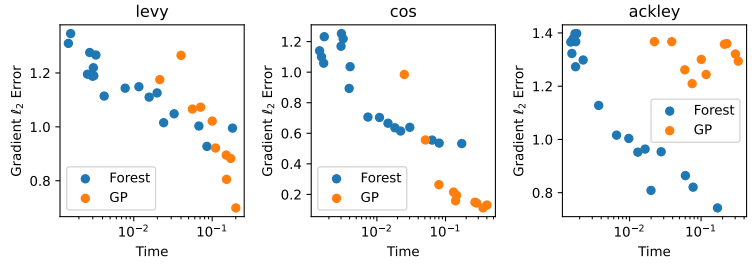}
    \caption{Gradient Error Comparison against GP surrogate.}
    \label{fig:gp_compare}
\end{figure}

\subsection{Active Subspace Estimation Performance under Noise}

This section repeats the experiments of Section \ref{sec:num_smol_active} where the output data are contaminated under small i.i.d. Gaussian noise with standard deviation $0.1$.
The results, presented in Figure \ref{fig:noisy} are very similar to the noiseless case but are nevertheless included for completeness.

\begin{figure}
    \centering
    \includegraphics[width=0.8\linewidth]{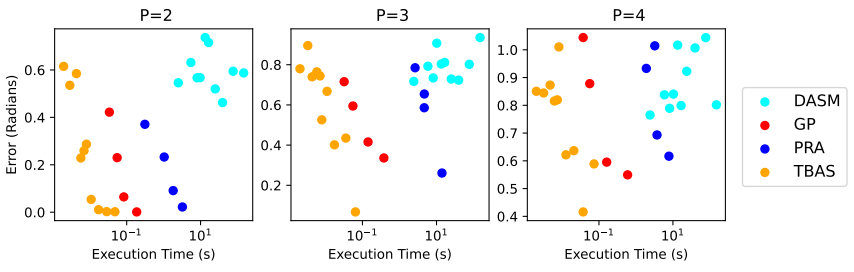}
    \caption{Active Subspace Estimation with Gaussian Noise (std=0.1;iid).}
    \label{fig:noisy}
\end{figure}

\subsection{Gradient Estimate under Input Variable Correlation}

Our numerical experiments on real data suggest our proposed method can operate under correlated inputs, and in this section we conduct an experiment to more closely investigate quantitatively the impact of input-variable correlation on gradient estimate quality.
Sampling data from a truncated normal in 5D with correlation varying from 0 to 0.99, we evaluated estimates of the Ackley function's gradient at 100 random points.
The results are presented in Figure \ref{fig:corr}.
We see that correlation can have some deleterious effects on the gradient estimates; however, it requires a very high level of correlation before the estimates are significantly affected.

\begin{figure}
    \centering
    \includegraphics[width=0.5\linewidth]{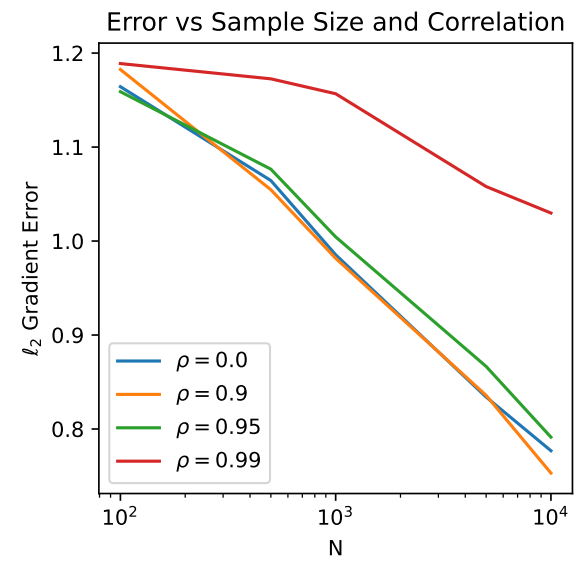}
    \caption{The Effect of Correlation on Gradient Estimates.}
    \label{fig:corr}
\end{figure}

\subsection{Numerical Experiments on Classification Trees}\label{app:class}

We repeat the experiments of Section \ref{sec:num_pred}, but now by replacing each regression problem with a classification problem by assessing whether a given observation falls above or below the median observation.
The results are given in Figure \ref{fig:class_bakeoff}.
Intriguingly, the results are significantly less promising for the TBAS method, despite the fact that by construction, there is structure in the data that TBAS could possibly exploit. 
This indicates that there may be special considerations to be taken in deploying this methodology to classification problems.

\begin{figure}
	\centering
	\textbf{Tree Depth 4}
	\includegraphics[width=0.95\textwidth]{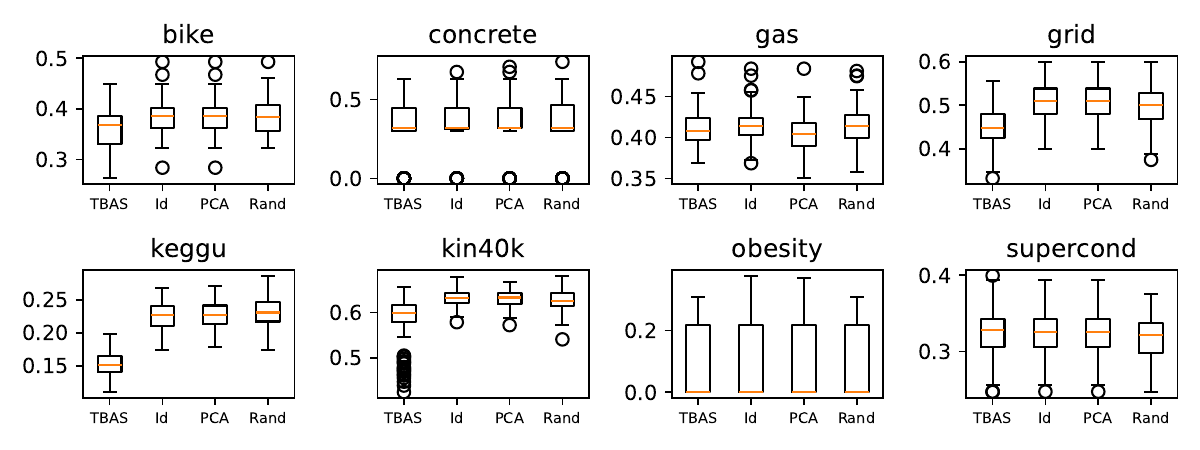}
	
	\textbf{Tree Depth 8}
	\includegraphics[width=0.95\textwidth]{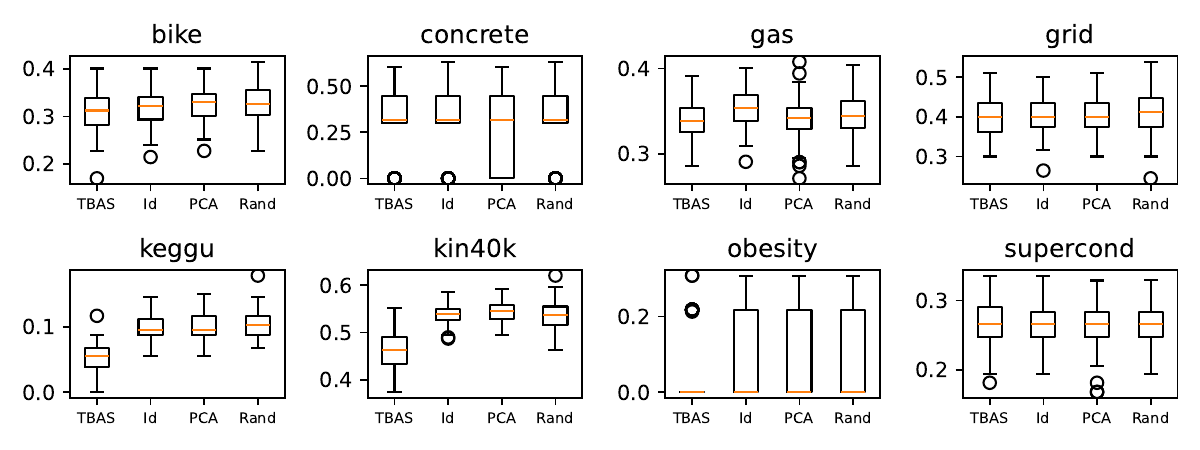}
	
	\textbf{Random Forest Depth 4}
	\includegraphics[width=0.95\textwidth]{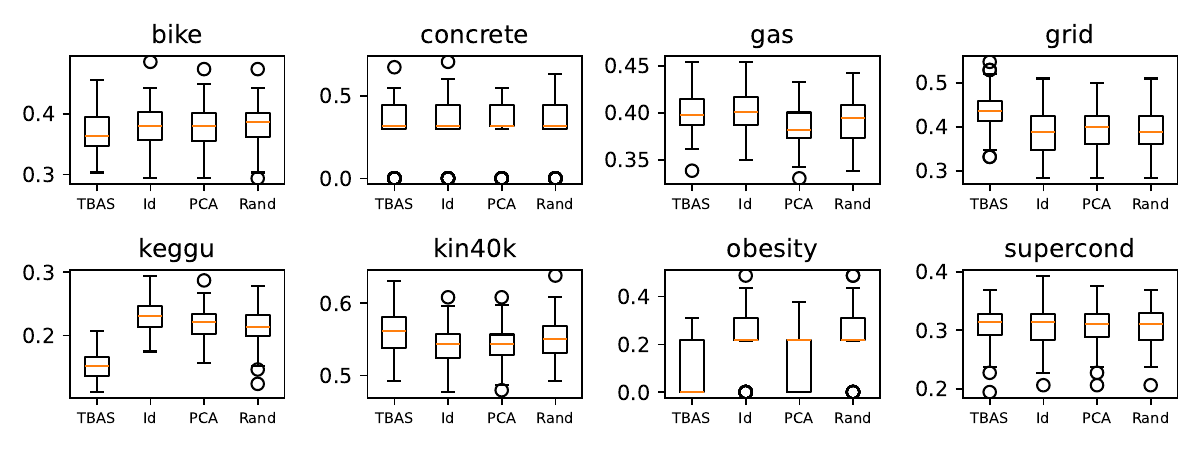}
	\caption{Classification Exercise: Brier Scores are indicated; lower is better.}
	\label{fig:class_bakeoff}
\end{figure}

\newpage
\section*{NeurIPS Paper Checklist}

\begin{enumerate}

\item {\bf Claims}
    \item[] Question: Do the main claims made in the abstract and introduction accurately reflect the paper's contributions and scope?
    \item[] Answer: \answerYes{} % Replace by \answerYes{}, \answerNo{}, or \answerNA{}.
    \item[] Justification: We propose a new way to get gradient information from regression trees, as we mention in the introduction.
    \item[] Guidelines:
    \begin{itemize}
        \item The answer NA means that the abstract and introduction do not include the claims made in the paper.
        \item The abstract and/or introduction should clearly state the claims made, including the contributions made in the paper and important assumptions and limitations. A No or NA answer to this question will not be perceived well by the reviewers. 
        \item The claims made should match theoretical and experimental results, and reflect how much the results can be expected to generalize to other settings. 
        \item It is fine to include aspirational goals as motivation as long as it is clear that these goals are not attained by the paper. 
    \end{itemize}

\item {\bf Limitations}
    \item[] Question: Does the paper discuss the limitations of the work performed by the authors?
    \item[] Answer: \answerYes{} % Replace by \answerYes{}, \answerNo{}, or \answerNA{}.
    \item[] Justification: Yes we have a dedicated Limitations section in the discussion.
    \item[] Guidelines:
    \begin{itemize}
        \item The answer NA means that the paper has no limitation while the answer No means that the paper has limitations, but those are not discussed in the paper. 
        \item The authors are encouraged to create a separate "Limitations" section in their paper.
        \item The paper should point out any strong assumptions and how robust the results are to violations of these assumptions (e.g., independence assumptions, noiseless settings, model well-specification, asymptotic approximations only holding locally). The authors should reflect on how these assumptions might be violated in practice and what the implications would be.
        \item The authors should reflect on the scope of the claims made, e.g., if the approach was only tested on a few datasets or with a few runs. In general, empirical results often depend on implicit assumptions, which should be articulated.
        \item The authors should reflect on the factors that influence the performance of the approach. For example, a facial recognition algorithm may perform poorly when image resolution is low or images are taken in low lighting. Or a speech-to-text system might not be used reliably to provide closed captions for online lectures because it fails to handle technical jargon.
        \item The authors should discuss the computational efficiency of the proposed algorithms and how they scale with dataset size.
        \item If applicable, the authors should discuss possible limitations of their approach to address problems of privacy and fairness.
        \item While the authors might fear that complete honesty about limitations might be used by reviewers as grounds for rejection, a worse outcome might be that reviewers discover limitations that aren't acknowledged in the paper. The authors should use their best judgment and recognize that individual actions in favor of transparency play an important role in developing norms that preserve the integrity of the community. Reviewers will be specifically instructed to not penalize honesty concerning limitations.
    \end{itemize}

\item {\bf Theory assumptions and proofs}
    \item[] Question: For each theoretical result, does the paper provide the full set of assumptions and a complete (and correct) proof?
    \item[] Answer: \answerYes{}
    \item[] Justification: We provide all proofs in the supplementary material, and carefully list our assumptions as part of Assumption 1 and Assumption 2 of the Appendix.
    \item[] Guidelines:
    \begin{itemize}
        \item The answer NA means that the paper does not include theoretical results. 
        \item All the theorems, formulas, and proofs in the paper should be numbered and cross-referenced.
        \item All assumptions should be clearly stated or referenced in the statement of any theorems.
        \item The proofs can either appear in the main paper or the supplemental material, but if they appear in the supplemental material, the authors are encouraged to provide a short proof sketch to provide intuition. 
        \item Inversely, any informal proof provided in the core of the paper should be complemented by formal proofs provided in appendix or supplemental material.
        \item Theorems and Lemmas that the proof relies upon should be properly referenced. 
    \end{itemize}

    \item {\bf Experimental result reproducibility}
    \item[] Question: Does the paper fully disclose all the information needed to reproduce the main experimental results of the paper to the extent that it affects the main claims and/or conclusions of the paper (regardless of whether the code and data are provided or not)?
    \item[] Answer: \answerYes{} % Replace by \answerYes{}, \answerNo{}, or \answerNA{}.
    \item[] Justification: We provide all necessary details to reproduce our paper in the narrative and algorithms.
    \item[] Guidelines:
    \begin{itemize}
        \item The answer NA means that the paper does not include experiments.
        \item If the paper includes experiments, a No answer to this question will not be perceived well by the reviewers: Making the paper reproducible is important, regardless of whether the code and data are provided or not.
        \item If the contribution is a dataset and/or model, the authors should describe the steps taken to make their results reproducible or verifiable. 
        \item Depending on the contribution, reproducibility can be accomplished in various ways. For example, if the contribution is a novel architecture, describing the architecture fully might suffice, or if the contribution is a specific model and empirical evaluation, it may be necessary to either make it possible for others to replicate the model with the same dataset, or provide access to the model. In general. releasing code and data is often one good way to accomplish this, but reproducibility can also be provided via detailed instructions for how to replicate the results, access to a hosted model (e.g., in the case of a large language model), releasing of a model checkpoint, or other means that are appropriate to the research performed.
        \item While NeurIPS does not require releasing code, the conference does require all submissions to provide some reasonable avenue for reproducibility, which may depend on the nature of the contribution. For example
        \begin{enumerate}
            \item If the contribution is primarily a new algorithm, the paper should make it clear how to reproduce that algorithm.
            \item If the contribution is primarily a new model architecture, the paper should describe the architecture clearly and fully.
            \item If the contribution is a new model (e.g., a large language model), then there should either be a way to access this model for reproducing the results or a way to reproduce the model (e.g., with an open-source dataset or instructions for how to construct the dataset).
            \item We recognize that reproducibility may be tricky in some cases, in which case authors are welcome to describe the particular way they provide for reproducibility. In the case of closed-source models, it may be that access to the model is limited in some way (e.g., to registered users), but it should be possible for other researchers to have some path to reproducing or verifying the results.
        \end{enumerate}
    \end{itemize}

\item {\bf Open access to data and code}
    \item[] Question: Does the paper provide open access to the data and code, with sufficient instructions to faithfully reproduce the main experimental results, as described in supplemental material?
    \item[] Answer: \answerYes{} % Replace by \answerYes{}, \answerNo{}, or \answerNA{}.
    \item[] Justification: We provide code reproducing our results and a README with instructions for doing so.
    \item[] Guidelines:
    \begin{itemize}
        \item The answer NA means that paper does not include experiments requiring code.
        \item Please see the NeurIPS code and data submission guidelines (\url{https://nips.cc/public/guides/CodeSubmissionPolicy}) for more details.
        \item While we encourage the release of code and data, we understand that this might not be possible, so “No” is an acceptable answer. Papers cannot be rejected simply for not including code, unless this is central to the contribution (e.g., for a new open-source benchmark).
        \item The instructions should contain the exact command and environment needed to run to reproduce the results. See the NeurIPS code and data submission guidelines (\url{https://nips.cc/public/guides/CodeSubmissionPolicy}) for more details.
        \item The authors should provide instructions on data access and preparation, including how to access the raw data, preprocessed data, intermediate data, and generated data, etc.
        \item The authors should provide scripts to reproduce all experimental results for the new proposed method and baselines. If only a subset of experiments are reproducible, they should state which ones are omitted from the script and why.
        \item At submission time, to preserve anonymity, the authors should release anonymized versions (if applicable).
        \item Providing as much information as possible in supplemental material (appended to the paper) is recommended, but including URLs to data and code is permitted.
    \end{itemize}

\item {\bf Experimental setting/details}
    \item[] Question: Does the paper specify all the training and test details (e.g., data splits, hyperparameters, how they were chosen, type of optimizer, etc.) necessary to understand the results?
    \item[] Answer: \answerYes{} % Replace by \answerYes{}, \answerNo{}, or \answerNA{}.
    \item[] Justification: We provide all necessary information when describing our runs.
    \item[] Guidelines:
    \begin{itemize}
        \item The answer NA means that the paper does not include experiments.
        \item The experimental setting should be presented in the core of the paper to a level of detail that is necessary to appreciate the results and make sense of them.
        \item The full details can be provided either with the code, in appendix, or as supplemental material.
    \end{itemize}

\item {\bf Experiment statistical significance}
    \item[] Question: Does the paper report error bars suitably and correctly defined or other appropriate information about the statistical significance of the experiments?
    \item[] Answer: \answerYes{} % Replace by \answerYes{}, \answerNo{}, or \answerNA{}.
    \item[] Justification: In our numerical results we use confidence interval overlap to assess significance.
    \item[] Guidelines:
    \begin{itemize}
        \item The answer NA means that the paper does not include experiments.
        \item The authors should answer "Yes" if the results are accompanied by error bars, confidence intervals, or statistical significance tests, at least for the experiments that support the main claims of the paper.
        \item The factors of variability that the error bars are capturing should be clearly stated (for example, train/test split, initialization, random drawing of some parameter, or overall run with given experimental conditions).
        \item The method for calculating the error bars should be explained (closed form formula, call to a library function, bootstrap, etc.)
        \item The assumptions made should be given (e.g., Normally distributed errors).
        \item It should be clear whether the error bar is the standard deviation or the standard error of the mean.
        \item It is OK to report 1-sigma error bars, but one should state it. The authors should preferably report a 2-sigma error bar than state that they have a 96\% CI, if the hypothesis of Normality of errors is not verified.
        \item For asymmetric distributions, the authors should be careful not to show in tables or figures symmetric error bars that would yield results that are out of range (e.g. negative error rates).
        \item If error bars are reported in tables or plots, The authors should explain in the text how they were calculated and reference the corresponding figures or tables in the text.
    \end{itemize}

\item {\bf Experiments compute resources}
    \item[] Question: For each experiment, does the paper provide sufficient information on the computer resources (type of compute workers, memory, time of execution) needed to reproduce the experiments?
    \item[] Answer: \answerYes{} % Replace by \answerYes{}, \answerNo{}, or \answerNA{}.
    \item[] Justification: We provide this in Appendix B.2.
    \item[] Guidelines:
    \begin{itemize}
        \item The answer NA means that the paper does not include experiments.
        \item The paper should indicate the type of compute workers CPU or GPU, internal cluster, or cloud provider, including relevant memory and storage.
        \item The paper should provide the amount of compute required for each of the individual experimental runs as well as estimate the total compute. 
        \item The paper should disclose whether the full research project required more compute than the experiments reported in the paper (e.g., preliminary or failed experiments that didn't make it into the paper). 
    \end{itemize}
    
\item {\bf Code of ethics}
    \item[] Question: Does the research conducted in the paper conform, in every respect, with the NeurIPS Code of Ethics \url{https://neurips.cc/public/EthicsGuidelines}?
    \item[] Answer: \answerYes{} % Replace by \answerYes{}, \answerNo{}, or \answerNA{}.
    \item[] Justification: Yes we respect this code of ethics.
    \item[] Guidelines:
    \begin{itemize}
        \item The answer NA means that the authors have not reviewed the NeurIPS Code of Ethics.
        \item If the authors answer No, they should explain the special circumstances that require a deviation from the Code of Ethics.
        \item The authors should make sure to preserve anonymity (e.g., if there is a special consideration due to laws or regulations in their jurisdiction).
    \end{itemize}

\item {\bf Broader impacts}
    \item[] Question: Does the paper discuss both potential positive societal impacts and negative societal impacts of the work performed?
    \item[] Answer: \answerYes{} % Replace by \answerYes{}, \answerNo{}, or \answerNA{}.
    \item[] Justification: Yes we give discussion of this in the Discussion section.
    \item[] Guidelines:
    \begin{itemize}
        \item The answer NA means that there is no societal impact of the work performed.
        \item If the authors answer NA or No, they should explain why their work has no societal impact or why the paper does not address societal impact.
        \item Examples of negative societal impacts include potential malicious or unintended uses (e.g., disinformation, generating fake profiles, surveillance), fairness considerations (e.g., deployment of technologies that could make decisions that unfairly impact specific groups), privacy considerations, and security considerations.
        \item The conference expects that many papers will be foundational research and not tied to particular applications, let alone deployments. However, if there is a direct path to any negative applications, the authors should point it out. For example, it is legitimate to point out that an improvement in the quality of generative models could be used to generate deepfakes for disinformation. On the other hand, it is not needed to point out that a generic algorithm for optimizing neural networks could enable people to train models that generate Deepfakes faster.
        \item The authors should consider possible harms that could arise when the technology is being used as intended and functioning correctly, harms that could arise when the technology is being used as intended but gives incorrect results, and harms following from (intentional or unintentional) misuse of the technology.
        \item If there are negative societal impacts, the authors could also discuss possible mitigation strategies (e.g., gated release of models, providing defenses in addition to attacks, mechanisms for monitoring misuse, mechanisms to monitor how a system learns from feedback over time, improving the efficiency and accessibility of ML).
    \end{itemize}
    
\item {\bf Safeguards}
    \item[] Question: Does the paper describe safeguards that have been put in place for responsible release of data or models that have a high risk for misuse (e.g., pretrained language models, image generators, or scraped datasets)?
    \item[] Answer: \answerNA{} % Replace by \answerYes{}, \answerNo{}, or \answerNA{}.
    \item[] Justification: This work has a low risk of misuse.
    \item[] Guidelines:
    \begin{itemize}
        \item The answer NA means that the paper poses no such risks.
        \item Released models that have a high risk for misuse or dual-use should be released with necessary safeguards to allow for controlled use of the model, for example by requiring that users adhere to usage guidelines or restrictions to access the model or implementing safety filters. 
        \item Datasets that have been scraped from the Internet could pose safety risks. The authors should describe how they avoided releasing unsafe images.
        \item We recognize that providing effective safeguards is challenging, and many papers do not require this, but we encourage authors to take this into account and make a best faith effort.
    \end{itemize}

\item {\bf Licenses for existing assets}
    \item[] Question: Are the creators or original owners of assets (e.g., code, data, models), used in the paper, properly credited and are the license and terms of use explicitly mentioned and properly respected?
    \item[] Answer: \answerYes{} % Replace by \answerYes{}, \answerNo{}, or \answerNA{}.
    \item[] Justification: We give dataset information in Appendix B.1.
    \item[] Guidelines:
    \begin{itemize}
        \item The answer NA means that the paper does not use existing assets.
        \item The authors should cite the original paper that produced the code package or dataset.
        \item The authors should state which version of the asset is used and, if possible, include a URL.
        \item The name of the license (e.g., CC-BY 4.0) should be included for each asset.
        \item For scraped data from a particular source (e.g., website), the copyright and terms of service of that source should be provided.
        \item If assets are released, the license, copyright information, and terms of use in the package should be provided. For popular datasets, \url{paperswithcode.com/datasets} has curated licenses for some datasets. Their licensing guide can help determine the license of a dataset.
        \item For existing datasets that are re-packaged, both the original license and the license of the derived asset (if it has changed) should be provided.
        \item If this information is not available online, the authors are encouraged to reach out to the asset's creators.
    \end{itemize}

\item {\bf New assets}
    \item[] Question: Are new assets introduced in the paper well documented and is the documentation provided alongside the assets?
    \item[] Answer: \answerNA{} % Replace by \answerYes{}, \answerNo{}, or \answerNA{}.
    \item[] Justification: This paper primarily introduces methodology.
    \item[] Guidelines:
    \begin{itemize}
        \item The answer NA means that the paper does not release new assets.
        \item Researchers should communicate the details of the dataset/code/model as part of their submissions via structured templates. This includes details about training, license, limitations, etc. 
        \item The paper should discuss whether and how consent was obtained from people whose asset is used.
        \item At submission time, remember to anonymize your assets (if applicable). You can either create an anonymized URL or include an anonymized zip file.
    \end{itemize}

\item {\bf Crowdsourcing and research with human subjects}
    \item[] Question: For crowdsourcing experiments and research with human subjects, does the paper include the full text of instructions given to participants and screenshots, if applicable, as well as details about compensation (if any)? 
    \item[] Answer: \answerNA{} % Replace by \answerYes{}, \answerNo{}, or \answerNA{}.
    \item[] Justification: No crowdsourcing or other human subjects whatsoever.
    \item[] Guidelines:
    \begin{itemize}
        \item The answer NA means that the paper does not involve crowdsourcing nor research with human subjects.
        \item Including this information in the supplemental material is fine, but if the main contribution of the paper involves human subjects, then as much detail as possible should be included in the main paper. 
        \item According to the NeurIPS Code of Ethics, workers involved in data collection, curation, or other labor should be paid at least the minimum wage in the country of the data collector. 
    \end{itemize}

\item {\bf Institutional review board (IRB) approvals or equivalent for research with human subjects}
    \item[] Question: Does the paper describe potential risks incurred by study participants, whether such risks were disclosed to the subjects, and whether Institutional Review Board (IRB) approvals (or an equivalent approval/review based on the requirements of your country or institution) were obtained?
    \item[] Answer: \answerNA{} % Replace by \answerYes{}, \answerNo{}, or \answerNA{}.
    \item[] Justification: No crowdsourcing or other human subjects whatsoever.
    \item[] Guidelines:
    \begin{itemize}
        \item The answer NA means that the paper does not involve crowdsourcing nor research with human subjects.
        \item Depending on the country in which research is conducted, IRB approval (or equivalent) may be required for any human subjects research. If you obtained IRB approval, you should clearly state this in the paper. 
        \item We recognize that the procedures for this may vary significantly between institutions and locations, and we expect authors to adhere to the NeurIPS Code of Ethics and the guidelines for their institution. 
        \item For initial submissions, do not include any information that would break anonymity (if applicable), such as the institution conducting the review.
    \end{itemize}

\item {\bf Declaration of LLM usage}
    \item[] Question: Does the paper describe the usage of LLMs if it is an important, original, or non-standard component of the core methods in this research? Note that if the LLM is used only for writing, editing, or formatting purposes and does not impact the core methodology, scientific rigorousness, or originality of the research, declaration is not required.
    %this research? 
    \item[] Answer: \answerNA{} % Replace by \answerYes{}, \answerNo{}, or \answerNA{}.
    \item[] Justification: LLMs were not involved in any aspect of this project.
    \item[] Guidelines:
    \begin{itemize}
        \item The answer NA means that the core method development in this research does not involve LLMs as any important, original, or non-standard components.
        \item Please refer to our LLM policy (\url{https://neurips.cc/Conferences/2025/LLM}) for what should or should not be described.
    \end{itemize}

\end{enumerate}

\end{document}